\documentclass{article}
\usepackage{amsmath}
\usepackage{amsthm}
\usepackage{hyperref}
 \usepackage{amssymb}
 \usepackage{xcolor}
 \usepackage{graphicx}
 \usepackage{mathtools}
 \usepackage{graphicx}
 \usepackage{float}
 \usepackage{tikz}
\usetikzlibrary{calc}
\usetikzlibrary{arrows.meta}
\usepackage{subcaption}
\usepackage{pgfplots}
\usepackage{booktabs}
\RequirePackage{natbib}

\definecolor{fieldPurple}{RGB}{135,31,255}
\definecolor{sampleColor}{RGB}{0,150,140}
\definecolor{fieldGreen}{RGB}{0, 180, 120}
\definecolor{BackgroundGraph}{RGB}{234,234,242}
\definecolor{BlueCustom}{RGB}{66,133,244}
\definecolor{RedCustom}{RGB}{219,68,55}
\definecolor{OrangeCustom}{RGB}{188,88,42}
\definecolor{Blue_field2}{RGB}{135,31,255}
\definecolor{Green_field2}{RGB}{0, 180, 120}

\newtheorem{theorem}{Theorem}[section]

\newtheorem{definition}{Definition}[section]
\newtheorem{remark}{Remark}[section]
\newtheorem{lemma}{Lemma}[section]
\usetikzlibrary{decorations.markings}

\tikzset{
  midarrow/.style={
    postaction={
      decorate,
      decoration={
        markings,
        mark=at position 0.55 with{\arrow[scale=1.0]{Stealth}}
      }
    }
  }
}

\tikzset{
  mycross/.pic={
    \draw[pic actions] 
      (-0.5pt,0) -- (0.5pt,0)
      (0,-0.5pt) -- (0,0.5pt);
  },
}

\usepackage[accepted]{icml2026}

\pgfplotsset{compat=1.18}

\begin{document}

\twocolumn[
 \icmltitle{Discovering Data Manifold Geometry via Non-Contracting Flows}



  \icmlsetsymbol{equal}{*}
   
  \begin{icmlauthorlist}
    \icmlauthor{David Vigouroux}{dv,imt2}
    \icmlauthor{Lucas Drumetz}{imt,ody,seqo}
    \icmlauthor{Ronan Fablet}{imt,ody,seqo}
    \icmlauthor{François Rousseau}{imt2,seqo}
  \end{icmlauthorlist}

  \icmlaffiliation{dv}{IRT Saint Exupery, ANITI, Toulouse, France}
  \icmlaffiliation{imt}{IMT Atlantique, Lab-STICC, UMR CNRS 6285, Plouzané, France}
  \icmlaffiliation{imt2}{IMT Atlantique, LaTIM UMR 1101 INSERM, Brest, France}
  \icmlaffiliation{ody}{INRIA, ODYSSEY team-project, Brest, France}
  \icmlaffiliation{seqo}{SequoIA}

  \icmlcorrespondingauthor{David Vigouroux}{david.vigouroux@irt-saintexupery.com}

  \icmlkeywords{Manifold Learning, Flows, Frame Learning, Riemannian Manifold, Isotropic, flow matching, ODE}

  \vskip 0.3in
]



\printAffiliationsAndNotice{}  


\begin{abstract}
We introduce an unsupervised approach for constructing a global reference system by learning, in the ambient space, vector fields that span the tangent spaces of an unknown data manifold. In contrast to isometric objectives, which implicitly assume manifold flatness, our method learns tangent vector fields whose flows transport all samples to a common, learnable reference point. The resulting arc-lengths along these flows define interpretable intrinsic coordinates tied to a shared global frame. To prevent degenerate collapse, we enforce a non-shrinking constraint and derive a scalable, integration-free objective inspired by flow matching. Within our theoretical framework, we prove that minimizing the proposed objective recovers a global coordinate chart when one exists. Empirically, we obtain correct tangent alignment and coherent global coordinate structure on synthetic manifolds. We also demonstrate the scalability of our method on CIFAR-10, where the learned coordinates achieve competitive downstream classification performance.
\end{abstract}

\section{Introduction}

Under the manifold hypothesis, high-dimensional data are assumed to concentrate near a low-dimensional manifold. When this manifold is known {\em a priori} or provided with a predefined parametrization, the underlying geometry is fixed and learning reduces to approximating functions or embeddings defined on it. In contrast, when the manifold itself is unknown and only accessible through sampled data, as considered here, learning an appropriate representation necessarily entails recovering its geometric structure, making representation learning for unknown manifolds a central problem \cite{manifoldlearningwhathow}.

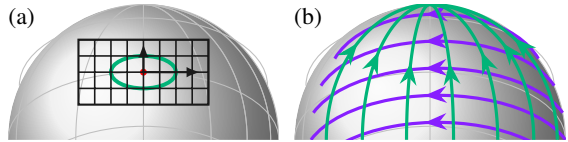
\begin{figure}[!t]
\centering
\begin{subfigure}{0.45\columnwidth}
\centering
\begin{tikzpicture}[scale=1.8]
\node[font=\small] at (-0.9,0.9) {(a)};
\begin{scope}
  \clip (-1.,0) rectangle (1.,1.0); 

  \shade[ball color=gray!15, opacity=0.30] (0,0) circle (1.);

\foreach \angle in {-160, -120, -90, -60,-40,-20,0,20,40,60}{
    \draw[gray!50, thin, opacity=0.5]
      plot[domain=0:180, samples=80]
        ({1.*sin(\x)*cos(\angle)}, {1.*cos(\x)});
  }

  \foreach \h in {-0.9,-0.8,-0.4,0,0.4}{
    \draw[gray!50, thin, opacity=0.5]
      ({sqrt(1 - (\h/1.)^2)*1.}, {\h})
        arc[start angle=0, end angle=180,
            x radius={sqrt(1 - (\h/1.)^2)*1.},
            y radius={0.5}];
  }
\end{scope}

\begin{scope}[shift={(0.0,0.5)}, rotate=0, scale=1.2]
  \fill[red] (0.,0.) circle (0.6pt);
  \draw[Green_field2, line width=1.5pt] (0,0) ellipse (0.2cm and 0.1cm);
  \draw[black!90, line width=0.8pt, opacity=0.30] (-0.4,-0.2) rectangle (0.4,0.2);
  \draw[black!90, line width=0.6pt, opacity=0.30] (-0.4,-0.2) grid[step=0.1] (0.4,0.2);
  \draw[black!90, -latex, line width=0.8pt] (0,0) -- (0,0.175);
  \draw[black!90, -latex, line width=0.8pt] (0,0) -- (0.35,0.);
\end{scope}
\end{tikzpicture}
\end{subfigure}
\begin{subfigure}{0.45\columnwidth}
\centering
\begin{tikzpicture}[scale=1.8]
\node[font=\small] at (-0.9,0.9) {(b)};
\begin{scope}
  \clip (-1.,0) rectangle (1.,1.0); 

  \shade[ball color=gray!15, opacity=0.30] (0,0) circle (1.);

\foreach \angle in {-160, -120, -90, -60,-40,-20,0,20,40,60}{
    \draw[gray!50, thin, opacity=0.5]
      plot[domain=0:180, samples=80]
        ({1.*sin(\x)*cos(\angle)}, {1.*cos(\x)});
  }

  \foreach \h in {-0.9,-0.8,-0.4,0,0.4}{
    \draw[gray!50, thin, opacity=0.5]
      ({sqrt(1 - (\h/1.)^2)*1.}, {\h})
        arc[start angle=0, end angle=180,
            x radius={sqrt(1 - (\h/1.)^2)*1.},
            y radius={0.5}];
  }

 \foreach \h in {-0.2, 0.0, 0.2, 0.4, 0.6}{
    \draw[Blue_field2, very thick, midarrow]
      ({sqrt(1 - (\h/1.)^2)*1.}, {\h})
        arc[start angle=20, end angle=160,
            x radius={sqrt(1 - (\h/1.)^2)*1.},
            y radius={0.5}];
  }

  \foreach \angle in {-140, -120, -100, -80, -60,-40,-20}{
  \draw[Green_field2, very thick, midarrow]
    plot[domain=90:0, samples=80]
      ({sin(\x)*cos(\angle)}, {cos(\x)});
    }
  
\end{scope}
\end{tikzpicture}
\end{subfigure}
\caption{(a) A local isometry maps a neighborhood of a surface to its tangent plane while preserving intrinsic distances. On curved surfaces such as a half sphere, this is impossible: Due to curvature, a geodesic neighborhood that is circular on the surface appears as an ellipse in the tangent plane. This illustrates why exact isometric learning cannot be achieved on non-flattenable surfaces. (b) A manifold is parallelizable if it admits a global frame of linearly independent tangent vector fields. The half sphere is shown with smoothly varying tangent directions, illustrating the notion of parallelizability.}
\label{fig::iso_vs_para}
\end{figure}
A large class of unsupervised manifold-learning methods aim to infer geometric structure directly from data by preserving local distances or approximate metric relations. A subset of these approaches seeks an \emph{isometric} representation, in which the latent space preserves intrinsic distances or the underlying Riemannian metric of the data manifold. Methods such as Isomap \cite{tenenbaum2000isomap}, Locally Linear Embedding \cite{roweis2000lle}, diffusion maps \cite{COIFMAN20065}, and spectral embeddings \cite{spectral_embeddings}, as well as more recent metric-consistent autoencoder and latent-variable models, fall into this category. However, isometric learning relies on a strong and often implicit assumption: that the data manifold is globally flat, \emph{i.e.}, admits a distance-preserving embedding, with respect to its usual metric, into a Euclidean space of zero curvature. This assumption is rarely justified in practice, and there is no general theoretical or empirical evidence that real-world data manifolds satisfy it.

In this work, we focus on a more general class of manifolds that admit a global coordinate representation and seek to discover the geometry of this unknown manifold from data. Such global coordinate representations are particularly valuable for downstream tasks—including classification, regression, clustering, and generative modeling—where geometric consistency directly impacts performance and interpretability. Even under this setting, constructing global, smooth, and geometrically consistent coordinates from data is ill-posed and remains a challenge \cite{beyond_euclidiean}.

Our approach is motivated by the concept of parallelizability of the underlying data manifold (see Fig.~\ref{fig::iso_vs_para} for a comparison with isometric representations). A parallelizable $m-$dimensional manifold admits a set of $m$ linearly independent vector fields defined everywhere on the manifold that span the tangent space at each point. Under this assumption, rather than relying on pairwise distance constraints, our method directly learns $m$ globally-defined vector fields, each representing a one-dimensional direction in the tangent space. Together, these vector fields recover the full tangent space at every point, thereby defining a global frame. While such a representation naturally suggests an underlying system of ordinary differential equations, our approach avoids the computational burden of an explicit integrations. We exploit the commutativity of the learned vector fields within a flow-matching–style objective to derive a scalable and numerically-stable learning procedure.
Importantly, parallelizability is used here as a conceptual tool rather than as a strict topological assumption. While the method is naturally formulated in terms of global vector fields, it also applies to non-parallelizable manifolds as demonstrated in our experiments.



Our contributions can be summarized as follows:
\begin{itemize}
    \item \textbf{Novel geometry discovery method for data manifolds.}
    We propose a new unsupervised method that learns tangent vector fields directly from data, without prior knowledge of the underlying manifold, and we define a flow-based global coordinate chart. 
    \item \textbf{Theoretical guarantees} We prove that the proposed optimization problem under non-contracting flow constraints has a well-defined optimum and that any minimizer recovers the targeted flow-based global coordinate chart.
    \item \textbf{Scalable training via commuting flows and flow matching} We derive a scalable, integration-free training objective inspired by flow matching that exploits the commutativity of the learned vector fields, enabling efficient training for high-dimensional data.
\end{itemize}

\section{Related work}

\paragraph{Learning unconstrained maps.} Many widely used representation-learning approaches aim to learn a map from data lying on an unknown low-dimensional manifold to a latent space, without assuming any prior knowledge of the manifold’s structure. This category includes reconstruction-based models such as autoencoders and variational autoencoders \citep{kingma2014vae,pmlr-v32-rezende14}, adversarially learned representations such as BiGAN and ALI \citep{donahue2017adversarial,dumoulin2017adversarially}, and contrastive or self-supervised encoders that rely on invariance to data augmentations rather than geometric fidelity \citep{chen2020simclr,he2020moco,zbontar2021barlow}. These latent representations are generally defined only up to arbitrary nonlinear transformations, so distances and directions lack intrinsic meaning. Imposing geometric constraints is crucial when interpretability, global structure, or coordinate consistency are desired. 
Disentanglement is often formulated as learning latent variables whose dimensions correspond to approximately independent generative factors, a perspective most commonly explored in variational autoencoders and their regularized variants \citep{higgins2017betavae, pmlr-v80-kim18b}.
From a geometric perspective, this corresponds to learning coordinates aligned with intrinsic directions of variation on the data manifold. However, disentanglement is fundamentally ill-posed without additional inductive biases or structural assumptions \citep{pmlr-v97-locatello19a}.

\paragraph{Isometric learning and metric-consistent representations.} Various works aim to recover the geometry of data manifolds by learning latent representations that preserve intrinsic distances, typically in the sense of approximating the underlying Riemannian metric whose induced path lengths define geodesic distances. This includes classical manifold learning methods targeting approximate isometries, as well as recent deep generative approaches based on normalizing flows and flow matching, where the Euclidean metric in latent space is pulled back to the data manifold to obtain a metric-consistent representation \citep{diepeveen2025manifoldlearningnormalizingflows,Shao2017TheRG,kruiff2025pullback}. Such isometric objectives effectively enforce a flat latent geometry and recover coordinates only up to isometries. In contrast, our work adopts a \emph{frame-learning} perspective: rather than learning a metric-equivalent embedding, we learn tangent directions whose commuting flows organize the manifold into a global coordinate system. This construction relies on parallelizability as a structuring principle for learning directions, but does not impose flatness or restrict the manifold’s intrinsic geometry.  Beyond pure distance preservation, some methods directly regularize the learned Riemannian metric using curvature-driven objectives, including Ricci-type flows, but their reliance on higher-order differential operators makes them challenging to scale in practice \citep{lazarev2025ricci}.


\paragraph{Relation to Bundle Networks.}
The notion of fiber bundles has also recently appeared in machine learning, most notably in Bundle Networks \citep{courts2022bundle}. They introduce deep generative architectures inspired by the differential‐topology concept of a fiber bundle. Bundle Networks model many-to-one mappings by decomposing the input space locally into the product of a base space and a fiber space, enabling conditional generation of multiple plausible inputs for a given output value. This local trivialization perspective is useful for exploring the fiber structure of a task and for generative modeling of conditional distributions. While both Bundle Networks and our work draw on geometric notions from manifold theory, they operate on fundamentally different objectives. Bundle Networks leverage fiber bundle structure to model local base, whereas our approach explicitly recovers intrinsic tangent directions and organizes them into a global coordinate chart, while scaling in more complex high-dimensional datasets.

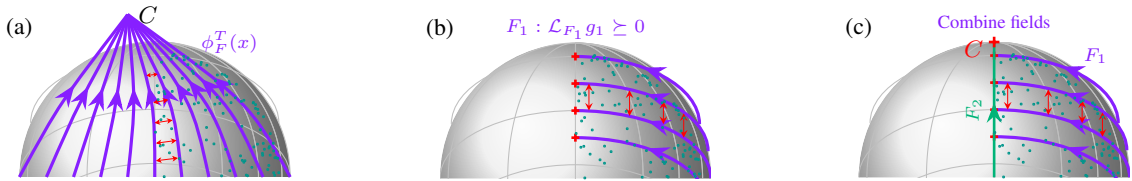
\begin{figure*}[!ht]
\centering
\begin{subfigure}{0.32\textwidth}
\centering
\begin{tikzpicture}[scale=1.8, >=stealth]
  \node[font=\small] at (-1,1.1) {(a)};

\begin{scope}
  \clip (-1.,0) rectangle (1.,1.0); 

  \shade[ball color=gray!15, opacity=0.30] (0,0) circle (1.);

\foreach \angle in {-160, -120, -90, -60,-40,-20,0,20,40,60}{
    \draw[gray!50, thin, opacity=0.5]
      plot[domain=0:180, samples=80]
        ({1.*sin(\x)*cos(\angle)}, {1.*cos(\x)});
  }

  \foreach \h in {-0.9,-0.8,-0.4,0,0.4}{
    \draw[gray!50, thin, opacity=0.5]
      ({sqrt(1 - (\h/1.)^2)*1.}, {\h})
        arc[start angle=0, end angle=180,
            x radius={sqrt(1 - (\h/1.)^2)*1.},
            y radius={0.5}];
  }
\end{scope}

  \coordinate (C) at (-0.2,1.2);      
  \coordinate (base) at (0,0.05);  

  \foreach \a/\lab in {-1.0/, -0.8/, -0.6/, -0.4/, -0.2/, -0.0/, 0.2/, 0.4/, 0.6/, 0.8/, 1.0/}{
    \coordinate (p) at (\a,0.0);
    \draw[fieldPurple, very thick, midarrow]
      (p) .. controls (\a*0.7,0.55) ..
      (C);
  }
  
 \node[font=\small] at (-0.05,1.2) {$C$};
 
\node[fieldPurple, font=\scriptsize] at (0.55,1.)
    {$\phi_F^{T}(x)$};

  \draw[red,
      arrows={Stealth[length=1.5pt, width=1.5pt]-Stealth[length=1.5pt, width=1.5pt]},
      line width=0.3pt
    ]
    (-0.06,0.75) -- (0.02,0.76);
  \draw[red,
      arrows={Stealth[length=2pt, width=2pt]-Stealth[length=2pt, width=2pt]},
      line width=0.3pt
    ]
    (-0.01,0.55) -- (0.099,0.57);
  \draw[red,
      arrows={Stealth[length=2pt, width=2pt]-Stealth[length=2pt, width=2pt]},
      line width=0.3pt
    ]
    (0.0,0.4) -- (0.13,0.42);
  \draw[red,
      arrows={Stealth[length=2pt, width=2pt]-Stealth[length=2pt, width=2pt]},
      line width=0.3pt
    ]
    (0.01,0.25) -- (0.16,0.27);
  \draw[red,
      arrows={Stealth[length=2pt, width=2pt]-Stealth[length=2pt, width=2pt]},
      line width=0.3pt
    ]
    (0.01,0.12) -- (0.17,0.14);

    \begin{axis}[
    width=2.5cm,
    height=2.5cm,
    axis equal,
    xmin=0, xmax=1,
    ymin=0, ymax=1,
    axis lines=none,   
    ticks=none,        
    color=sampleColor
]

    \addplot[
        only marks,
        mark=*,
        mark size=0.1pt
    ] 
    table [
        col sep=space        
    ] {random_points.dat};
    
\end{axis}

\end{tikzpicture}
\end{subfigure}
\begin{subfigure}{0.32\textwidth}
\centering
\begin{tikzpicture}[scale=1.8, >=stealth]
  \node[font=\small] at (-1,1.1) {(b)};

  \begin{scope}
  \clip (-1.,0) rectangle (1.,1.0);

  \shade[ball color=gray!15, opacity=0.30] (0,0) circle (1.);

\foreach \angle in {-160, -120, -90, -60,-40,-20,0,20,40,60}{
    \draw[gray!50, thin, opacity=0.5]
      plot[domain=0:180, samples=80]
        ({1.*sin(\x)*cos(\angle)}, {1.*cos(\x)});
  }

  \foreach \h in {-0.9,-0.8,-0.4,0,0.4}{
    \draw[gray!50, thin, opacity=0.5]
      ({sqrt(1 - (\h/1.)^2)*1.}, {\h})
        arc[start angle=0, end angle=180,
            x radius={sqrt(1 - (\h/1.)^2)*1.},
            y radius={0.5}];
  }

    \foreach \h in {-0.2, 0.0, 0.2, 0.4}{
    \draw[fieldPurple, very thick, midarrow]
      ({sqrt(1 - (\h/1.)^2)*1.}, {\h})
        arc[start angle=0, end angle=90,
            x radius={sqrt(1 - (\h/1.)^2)*1.},
            y radius={0.5}];
            
    \pic[line width=3pt,red] at (0,\h+0.5) {mycross};
  }
  
\end{scope}

  \def\lat{0.55} 
  \pgfmathsetmacro{\radlat}{sqrt(1 - (\lat/1.05)^2)*1.05}

  \draw[red, <->, line width=0.3pt]
    (0.1,0.5) -- (0.1,0.70);

  \draw[red, <->, line width=0.3pt]
    (0.4,0.46) -- (0.4,0.65);

    \draw[red, <->, line width=0.3pt]
    (0.65,0.4) -- (0.65,0.57);

  \draw[red, <->, line width=0.3pt]
    (0.8,0.3) -- (0.8,0.48);

    \begin{axis}[
    width=2.5cm,
    height=2.5cm,
    axis equal,
    xmin=0, xmax=1,
    ymin=0, ymax=1,
    axis lines=none,   
    ticks=none,        
    color=sampleColor
]

    \addplot[
        only marks,
        mark=*,
        mark size=0.1pt
    ] 
    table [
        col sep=space        
    ] {random_points.dat};
\end{axis}  

\node[fieldPurple, font=\scriptsize] at (0.,1.1)
    {$F_1 : \mathcal{L}_{F_1} g_1 \succeq 0$};

\end{tikzpicture}
\end{subfigure}
\begin{subfigure}{0.32\textwidth}
\centering
\begin{tikzpicture}[scale=1.8, >=stealth]
  \node[font=\small] at (-1,1.1) {(c)};

  \begin{scope}
  \clip (-1.,0) rectangle (1.,1.0);

  \shade[ball color=gray!15, opacity=0.30] (0,0) circle (1.);

\foreach \angle in {-160, -120, -90, -60,-40,-20,0,20,40,60}{
    \draw[gray!50, thin, opacity=0.5]
      plot[domain=0:180, samples=80]
        ({1.*sin(\x)*cos(\angle)}, {1.*cos(\x)});
  }

  \foreach \h in {-0.9,-0.8,-0.4,0,0.4}{
    \draw[gray!50, thin, opacity=0.5]
      ({sqrt(1 - (\h/1.)^2)*1.}, {\h})
        arc[start angle=0, end angle=180,
            x radius={sqrt(1 - (\h/1.)^2)*1.},
            y radius={0.5}];
  }

    \foreach \h in {-0.2, 0.0, 0.2, 0.4}{
    \draw[fieldPurple, very thick, midarrow]
      ({sqrt(1 - (\h/1.)^2)*1.}, {\h})
        arc[start angle=0, end angle=90,
            x radius={sqrt(1 - (\h/1.)^2)*1.},
            y radius={0.5}];
            
    \pic[line width=3pt,red] at (0,\h+0.5) {mycross};
  }
  
\end{scope}

  \def\lat{0.55} 
  \pgfmathsetmacro{\radlat}{sqrt(1 - (\lat/1.05)^2)*1.05}

  \draw[red, <->, line width=0.3pt]
    (0.1,0.5) -- (0.1,0.70);

  \draw[red, <->, line width=0.3pt]
    (0.4,0.46) -- (0.4,0.65);

    \draw[red, <->, line width=0.3pt]
    (0.65,0.4) -- (0.65,0.57);

  \draw[red, <->, line width=0.3pt]
    (0.8,0.3) -- (0.8,0.48);

  \draw[fieldGreen, line width=1.pt, midarrow]
        (0,0) -- (0,1.0);

\begin{axis}[
    width=2.5cm,
    height=2.5cm,
    axis equal,
    xmin=0, xmax=1,
    ymin=0, ymax=1,
    axis lines=none,   
    ticks=none,        
    color=sampleColor
]

    \addplot[
        only marks,
        mark=*,
        mark size=0.1pt
    ] 
    table [
        col sep=space        
    ] {random_points.dat};
\end{axis}    

\node[fieldPurple, font=\scriptsize] at (0.75,0.9)
    {$F_1$};
\node[fieldGreen, font=\scriptsize, rotate=90] at (-0.15,0.5)
    {$F_2$};

\node[fieldPurple, font=\scriptsize] at (0.,1.15) {Combine fields};

\node[font=\small, red] at (-0.15,0.95) {$C$};
\pic[line width=4pt,red] at (0,1.0) {mycross};

\end{tikzpicture}
\end{subfigure}

\caption{\textbf{Main Principle of Our Method.}
Given data to an unknown manifold, our goal is to construct a global reference frame defined by a trainable center $C$.
(a) Learning a single unconstrained vector field is insufficient: such a field can always be rescaled to collapse the entire manifold onto $C$.
(b) Imposing a non-shrinking constraint—by requiring the Lie derivative of the metric to be positive semi-definite (see Appendix~\ref{app:background})—prevents this collapse and enforces geometric consistency.
(c) By sequentially combining m tangent vector fields (matching the intrinsic dimension of the manifold), we can transport every point to a unique location associated with the global frame’s center while ensuring trajectories remain on the manifold.}
\label{fig::main_method}
\end{figure*}

\paragraph{Implicit manifold representations.}\hspace{0pt}Some approaches represent data manifolds as level sets of scalar functions or energy landscapes in the ambient space. This includes implicit neural representations, score-based models, and energy-based formulations, where the manifold is characterized as the set of points minimizing an energy or satisfying a constraint \citep{LeCun2006ATO, Song2020ScoreBasedGM, bethune2025follow, pmlr-v267-diepeveen25a}. Such approaches are effective for modeling complex data distributions and for projection or sampling near the manifold, but they do not provide intrinsic coordinates or explicit tangent structure. In contrast to our frame-learning approach, implicit representations describe \emph{where} the manifold lies in ambient space rather than \emph{how} to traverse it along intrinsic directions, and therefore do not directly support global coordinate construction or directional manipulation.

\paragraph{Lyapunov and contraction-based vector field learning.}
Another related line of work focuses on learning dynamical systems under explicit stability constraints, typically via Lyapunov functions \citep{Khalil:1173048, pmlr-v162-rodriguez22a} or contraction metrics. These methods enforce dissipativity by requiring a Lyapunov function to decrease along trajectories, or equivalently that the Lie derivative of a metric is negative semidefinite, ensuring convergence and robustness \citep{NEURIPS2019_2647c1db}. In contrast, our work adopts the opposite geometric objective: rather than enforcing contraction through a negative semidefinite Lie derivative, we impose a positive semidefinite condition on the Lie derivative of the Euclidean metric. This avoids bringing neighboring flow lines arbitrarily close, thereby preventing degeneration into a lower-dimensional structure. 


\section{Method}

\subsection{Principle}

\paragraph{Main idea.}
Our method is designed to recover a global coordinate representation of an unknown data manifold by learning vector fields in the ambient Euclidean space whose restrictions parameterize the manifold’s tangent spaces. These vector fields induce flows that transport data points along intrinsic directions of the manifold toward a common global reference point, thereby yielding an interpretable global coordinate system embedded in the ambient space.

Formally, we consider data sampled from an unknown $m$-dimensional manifold smoothly embedded into $M \hookrightarrow \mathbb{R}^n$. Our goal is to learn a set of $m$ vector fields that capture the intrinsic directions of $M$ and whose induced flows provide global coordinates. Each vector field is defined in the ambient space $\mathbb{R}^n$ as the solution of a minimization problem to behave consistently with the geometry of the manifold. Our method is inspired by parallelizability, though it does not strictly rely on global parallelizability; see Appendix~\ref{app:parallel}.

\paragraph{Flow-based view of global coordinates.}\hspace{0pt}The core idea is to learn a flow-based mapping that moves each data point $x \in M$ \emph{along the manifold} to a reference point $C \in \mathbb{R}^n$, which serves as the origin of a global coordinate frame. This map is constructed as a composition of $m$ flows, $\quad \phi_m \circ \cdots \circ \phi_1$, where each flow $\phi_i$ captures one intrinsic degree of freedom of the manifold. \\

\begin{definition}[Flow]
Each flow $\phi_i$ is defined by a pair $(F_i, T_i)$, where $F_i : \mathbb{R}^n \to \mathbb{R}^n$ is a vector field and $T_i: \mathbb{R}^n \to \mathbb{R}_{+}$ is a time horizon. Specifically,
\begin{equation}
\phi_i(x) = \Phi_{F_i}^{T_i}(x) = z_i\!\left(T_i(x)\right), \quad
\begin{cases}
\displaystyle \frac{d}{dt}\, z_i(t) = F_i\big(z_i(t)\big), \\
z_i(0) = x.
\end{cases}
\label{def::flow}
\end{equation}
\end{definition}

The aim of our method is to construct a sequence of flows acting along tangent directions of the manifold, where each successive flow removes one intrinsic dimension by projecting the manifold along that direction: the first flow resolves one direction, the second another, and so on, until all points are transported to the same reference location $C$. As a result, the arc-lengths $(\ell_1(x), \dots, \ell_m(x))$ traveled along each flow -- see Appendix~\ref{app:background} -- serve as the coordinates of $x$ in the learned global frame.

\paragraph{Frame learning loss.}
To learn the vector fields and time horizons, we introduce the \emph{Frame learning loss}
\[
\min_{\{F_i\},\{T_i\},C} 
\mathbb{E}_{x \sim X}\!\left[
\left\|\, C - (\phi_m \circ \cdots \circ \phi_1)(x) \right\|_2^2
\right],
\]
where $X$ denotes the data distribution supported on $M$.

The purpose of this loss is not merely to collapse the data to a point. Rather, the sequential structure of the flows is constructed so that each vector field aligns with a distinct intrinsic direction of the manifold. When each $F_i$ is tangent to $M$, i.e., $F_i(x) \in T_x M$ for all $x \in M$, its integral curves follow one coordinate direction of the manifold, and the composition of these flows yields a geometrically-consistent global parameterization of $M$.

\paragraph{Avoiding degenerate solutions.}
Without additional structure, the optimization problem admits degenerate solutions in which the vector fields shrink all trajectories so that every point collapses directly to the reference point $C$, ignoring the geometry of the manifold (Fig.~\ref{fig::main_method}a). The resulting arc-lengths do not encode meaningful coordinates.

To prevent this behavior, we impose a \emph{non-shrinking condition} on each vector field. Intuitively, nearby trajectories should not locally contract distances along the flow. Formally, this is enforced by requiring the Lie derivative of a conformally Euclidean metric $g_i$ along to each vector field $F_i$ to be positive semidefinite (see Appendix~\ref{app:background}). 
This condition enforces local non-contraction and rules out degenerate collapsing solutions (Fig.~\ref{fig::main_method}b). This notion is formally introduced in Definition~\ref{def::flows_metric_times}.

\paragraph{Geometric outcome.}
Under this constraint, each learned vector field becomes tangent to $M$ and defines, at every point $x \in M$, a one-dimensional intrinsic direction. Composing $m$ such flows recovers a full global reference frame: successive flows resolve the intrinsic degrees of freedom one by one, and the final composition maps the entire manifold to a single reference point while yielding a complete and interpretable global coordinate system (Fig.~\ref{fig::main_method}c). From a differential-geometric perspective, each vector field induces a smooth line subbundle of $TM$, and the resulting global frame realizes a decomposition of the tangent bundle into one-dimensional subbundles \citep[Ch.~10]{lee_introduction_2012}.

\paragraph{Training and scalability.} To make this framework computationally practical, we exploit the commutativity of the learned vector fields. Informally, commutativity (see Appendix~\ref{app:background} -- Figure~\ref{fig::commuting_flows}) means that flowing along one vector field and then another yields the same result regardless of the order in which the flows are applied. We leverage this property to replace a minimization objective based on the numerical integration of the flows with a flow–matching–inspired objective that directly aligns the vector fields with the desired transport directions. As a result, training becomes integration-free, scalable, and numerically-stable, even as the intrinsic dimension of the manifold grows.

\subsection{Theoretical Results}

We now formalize the framework introduced in the previous section and develop the mathematical setting required to state our main theoretical results. We recall that the objective of our method is to recover a global coordinate system on an unknown manifold by composing flows generated by suitably constrained vector fields.

The global parametrization considered in this work  only applies to manifolds satisfying a mild regularity condition, formalized through the notion of \emph{admissible manifolds}, namely compact embedded submanifolds of $\mathbb{R}^n$ that admit a global chart onto an Euclidean domain. This assumption ensures the absence of topological obstructions and provides a natural target for the coordinate functions produced by our method. \\


\begin{definition}[Admissible Manifolds with a Global Chart]
Let \(\mathcal{M}_m\) denote the class of all compact, connected, smoothly embedded \(m\)-dimensional submanifolds with boundary of \(\mathbb{R}^n\) that admit a \emph{global chart} in the sense of manifolds with boundary:
\[
\psi: M \xrightarrow{\;\cong\;} U \subset \mathbb{R}^m,
\]
where $M \in \mathcal{M}_m$ and \(U\) is a compact set with nonempty interior.
\end{definition}

The flows used to parameterize the manifold are generated by vector fields subject to a non--shrinking condition.  It requires the Lie derivative of the metric along the vector field to be non-negative (see Appendix~\ref{app:background} for a formal definition of the Lie derivative). This ensures that the induced dynamics follow the tangent structure of the manifold and avoid collapse, so that arc-lengths along integral curves can be interpreted as intrinsic coordinates. In addition, each vector field is equipped with a time horizon function that forces all trajectories to terminate at a common reference point, thereby enabling the construction of a coherent global frame. \\
\begin{definition}[Admissible maps induced by flows]
\label{def::flows_metric_times}
Let $\mathcal{F}$ denote the space of smooth flows $\phi : \mathbb{R}^n \to \mathbb{R}^n$ obtained from a triple $(F,T,\sigma)$, where
\begin{itemize}
    \item $T : \mathbb{R}^n \to \mathbb{R}^+$ is a smooth bounded time horizon function.
    \item $\sigma : \mathbb{R}^n \to \mathbb{R}$ be a bounded smooth scalar field defining a conformally Euclidean Riemannian metric
    \[
    g(x) = e^{\sigma(x)} I_n.
    \]
    \item $F:\mathbb{R}^n \rightarrow \mathbb{R}^n$ is a nowhere-vanishing smooth bounded vector field such that the Lie derivative of the conformaly Euclidean metric $g$ along $F$ satisfies
    \[
    \mathcal{L}_F g  \succeq 0 \quad \text{on } \mathbb{R}^n , \quad  \big| \langle \nabla \sigma(x), F(x) \rangle \big| \le K < \infty
    \]
    \[
    \big\langle \nabla T(x),\, F(x) \big\rangle = -1 \quad \forall x \in M \in \mathcal{M}_m
    \]
\end{itemize}
For each such triple, following Def~\ref{def::flow}, the flow is defined as:
\[
\phi(x) = \Phi_F^{T}(x),
\]
and all such maps $\phi$ are collected in $\mathcal{F}$.
\end{definition}

The flows are then composed sequentially. Each vector field spans a one-dimensional subbundle of the tangent bundle of the manifold, and the length traveled along its integral curves defines a scalar quantity $\ell_i(x)$. These quantities are interpreted as intrinsic coordinates associated with the successive flow directions, and their joint behavior under composition is central to our theoretical analysis. \\


\begin{definition}[Lengths of field line segments]
Given a sequence of flows $\{\phi_1, \dots, \phi_k\}$ generated by integrating $F_i$ during the time $T_i(x)$, and a point $x \in \mathbb{R}^n$, the quantity $\ell_i(x)$ denotes the length of the field line segment traced by $F_i$ during time $T_i$, starting from
\[
(\phi_{i-1} \circ \cdots \circ \phi_1)(x).
\]
\end{definition}
A formal definition of $l_i$ is provided in Appendix~\ref{app:background}.

Under the admissibility conditions described above, the optimization of the
frame learning loss admits a well-defined geometric interpretation. For a known
intrinsic dimension $m$, the minimizer yields vector fields that are tangent to
$M$, whose flows cover the manifold, and whose associated intrinsic lengths
$(\ell_1,\dots,\ell_m)$ define a global chart. This behavior is formalized in the following theorem.


\begin{theorem}[Loss Minimization when the Dimension is Known]
\label{thm:loss_known}
Let $M \in \mathcal{M}_m$. Let \(\{\phi_i\} \in \mathcal{F}\).
Let  $X$ be a random variable drawn from a probability distribution  $\rho$ whose support coincides with the manifold $M$. Define
\[
\mathcal{L}_m = \min_{\{T_i,F_i,\sigma_i\}_{i=1}^m, \, C} 
\mathbb{E}_{x \sim X} \Big[ \|\, C - (\phi_m \circ \cdots \circ \phi_1)(x) \,\|^2_2 \Big]
\]
Then, the minimal value $\mathcal{L}_m$ is equal to $0$ and the vector fields generating the flows are tangent to $M$ and the flows generated by these vector fields cover $M$. Moreover, the functions $(\ell_1, \dots, \ell_m)$ provide a system of global coordinates on $M$.
\end{theorem}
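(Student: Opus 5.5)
The proof has two halves. First, an explicit construction shows that the value $0$ is attained within $\mathcal{F}$; since the loss is nonnegative, this gives $\mathcal{L}_m=0$. Second, a structural argument shows that any family achieving zero loss must consist of fields tangent to $M$ whose arc-lengths form a global chart.

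\textbf{Attaining zero.} Take the global chart $\psi: M \to U\subset\mathbb{R}^m$ from the definition of $\mathcal{M}_m$. Translating and shrinking $U$ if needed, we may assume $U$ lies in the positive orthant with a corner-type reference point $u_0$, so that each coordinate $u^i - u_0^i$ is nonnegative on $U$. Push the coordinate fields forward, setting $F_i = -\,d\psi^{-1}(e_i)$ on $M$, so that flowing along $F_i$ decreases the $i$-th coordinate and leaves the others fixed. These fields commute, because they are coordinate fields.

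Extend each $F_i$ smoothly to a bounded field on $\mathbb{R}^n$, using a tubular neighbourhood of $M$ and a cutoff, and add a nonvanishing term off the neighbourhood. Set $T_i = \psi^i - u_0^i$ on $M$ and extend it to a bounded smooth function. Then $\langle\nabla T_i,F_i\rangle = -1$ on $M$, and the composition $\phi_m\circ\cdots\circ\phi_1$ sends every $x\in M$ to $C:=\psi^{-1}(u_0)$. There is a subtlety here: $\phi_i$ is evaluated at $(\phi_{i-1}\circ\cdots)(x)$, which has the same $i$-th coordinate as $x$ because the earlier flows preserve it.

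The non-shrinking condition requires choosing $\sigma_i$. For $g=e^{\sigma}I$ one has $\mathcal{L}_F g = e^{\sigma}\big((F\cdot\nabla\sigma) I + DF + DF^\top\big)$. Since $M$ is compact and $F_i$ is bounded with bounded derivative on the relevant region, I would pick $\sigma_i$ so that $F_i\cdot\nabla\sigma_i \ge \sup\|DF_i+DF_i^\top\|$, for instance $\sigma_i = -\lambda T_i$ near $M$, which makes $F\cdot\nabla\sigma=\lambda$ on $M$. Away from $M$, the extension is modified so that the same inequality holds with a bounded $\sigma$. \textbf{This global extension, with $\mathcal{L}_F g\succeq0$ holding on all of $\mathbb{R}^n$ while $\sigma$, $T$ and $F$ all stay bounded and $F$ nowhere vanishes, is the main technical obstacle.} I would handle it by working on a large ball containing all trajectories and then using a linear-field plus cutoff construction outside that ball.

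\textbf{Necessity.} Suppose the loss is zero. Since $\rho$ has full support on $M$ and the composed map is continuous, the composition equals $C$ on all of $M$. The tangency claim uses the fact that the $F_i$ are nowhere vanishing and that non-shrinking flows cannot merge distinct trajectories: $\mathcal{L}_F g\succeq0$ implies the flow is non-contracting in $g$, and hence injective with uniformly bounded inverse Lipschitz constants, because $\sigma$ and $|F\cdot\nabla\sigma|\le K$ are bounded. Given that, mapping the whole $m$-dimensional $M$ onto the point $C$ through $m$ successive flows requires each flow to reduce the dimension of the image by exactly one. Indeed, the map $x\mapsto \Phi^{T(x)}_F(x)$ has rank defect at most one, because $\Phi^t_F$ is a diffeomorphism and only the time direction can be lost.

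So each stage must collapse exactly one dimension, along the integral curves of $F_i$. This forces $F_i$ to lie in the tangent space of the current image manifold $M_{i-1}=(\phi_{i-1}\circ\cdots\circ\phi_1)(M)$. Arguing backwards, this also gives tangency to $M$, or I would state tangency in this sense if the full statement requires an extra commutation hypothesis. Because each collapse is along one-dimensional leaves, the flows cover $M$.

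\textbf{Coordinates.} The map $x\mapsto(\ell_1(x),\dots,\ell_m(x))$ is injective. If two points had equal lengths, running the flows backwards from $C$ with the same lengths would reconstruct both, since arc-length reparametrizes each nonvanishing trajectory. It is also continuous and smooth, since $T_i$, $F_i$ and the flows are smooth and $|F_i|>0$. The differential of this map has full rank, again by the rank-one-per-stage count. A continuous injection from the compact set $M$ is a homeomorphism onto its image, so $(\ell_1,\dots,\ell_m)$ is a global chart.
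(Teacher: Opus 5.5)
\textbf{Gap: attaining zero.} Your normalization ``translate and shrink $U$ so that it lies in the positive orthant with a corner point $u_0$'' is not without loss of generality. For $m\ge 2$ it is in fact impossible. You need $u_0\in U$ to define $C=\psi^{-1}(u_0)$. But the tangent cone of $U$ at $u_0$ is the image under the isomorphism $d\psi_C$ of a half-space (or of all of $T_CM$), so it contains a line and cannot lie in the orthant. Hence for every $u_0\in U$ some $u\in U$ has $u^i<u_0^i$, and some $T_i=\psi^i-u_0^i$ is negative on $M$.

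Placing $u_0$ outside $U$ (say at the corner of a bounding box) does not help. The staircase paths $u\to(u_0^1,u^2,\dots,u^m)\to\cdots\to u_0$ then leave $U$, so the trajectories leave $M$. Off $M$ neither $\langle\nabla T_i,F_i\rangle=-1$ nor ``earlier flows preserve the $i$-th coordinate'' holds, so nothing makes the composite land at $C$. Rescaling does not change the shape of $U$. Even if signed times were allowed, for non-convex $U$ such as an annulus the legs would cross the hole.

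The paper's Step A is the same construction in other words: the frame $\psi^*(\partial/\partial u^i)$, hitting times onto transversals $\Sigma_i$, and $\sigma$ from the transport equation $F[\sigma]=a$, of which your $\sigma_i=-\lambda T_i$ is an explicit solution. It simply asserts that a transversal exists which every leaf meets exactly once, with a smooth hitting time. That is exactly what fails here. The paper also checks $\mathcal{L}_Fg\succeq0$ only at points of $M$, so the global extension you flag is left open there too.

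\textbf{Necessity: a different and cleaner route, but tangency for $j\ge2$ is not reached.} The paper bounds side lengths of mapped hyperrectangles via non-contraction (Lemmas~\ref{thm:two_direction_noncollapse} and~\ref{thm:tangency_from_rectangle_collapse}) and runs a limiting argument over near-minimizers. Your rank count rests on the identity $d\phi_x=D\Phi_F^{T(x)}(x)\,\big(I+F(x)\nabla T(x)^{\top}\big)$. It shows that $\ker d\phi_x$ equals $\mathrm{span}\,F(x)$ when $\langle\nabla T,F\rangle(x)=-1$ and is trivial otherwise. So ``exactly one dimension per stage'' holds pointwise for exact minimizers, with no use of $\mathcal{L}_Fg\succeq0$: every time-$t$ map of a complete flow is a diffeomorphism, and the constraint only matters for near-minimizers. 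Run backwards from stage $m$, the same kernel description also gives the full rank you assert for $(\ell_1,\dots,\ell_m)$. Your backward-reconstruction injectivity is more global than the paper's inverse-function-theorem step, which yields only local coordinates.

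However, the count gives $F_1(x)\in T_xM$ and, for $j\ge2$, only $F_j(y_{j-1}(x))\in d(\phi_{j-1}\circ\cdots\circ\phi_1)_x(T_xM)$. ``Arguing backwards'' adds nothing: the loss sees $F_j$ only along trajectories issuing from $M^{(j-1)}$, an at most $(m-1)$-dimensional set. So a normal component can be added to $F_j$ elsewhere on $M$ without changing the loss. Keeping $\nabla T_j$ tangent along $M$ preserves $\langle\nabla T_j,F_j\rangle=-1$, and $\sigma_j$ absorbs the change in $JF_j$ near $M$ as in your construction.

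Your hedge is therefore warranted. Tangency of $F_j$ to $M$ for $j\ge2$ needs an extra hypothesis, such as the commutation you mention together with the flows staying on $M$. The paper's ``pulling back through $\phi^\star_{j-1}\circ\cdots\circ\phi^\star_1$'' does not supply it.
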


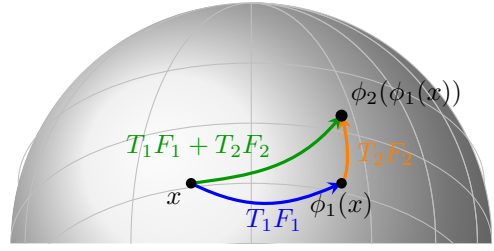
\begin{figure}
\centering
\begin{tikzpicture}[>=stealth, scale=2]

\begin{scope}
  \clip (-1.6,0) rectangle (1.6,1.6); 

  \shade[ball color=gray!15, opacity=0.25] (0,0) circle (1.6);

\foreach \angle in {-160, -120, -90, -60,-40,-20,0,20,40,60}{
    \draw[gray!50, thin, opacity=0.5]
      plot[domain=0:180, samples=80]
        ({1.6*sin(\x)*cos(\angle)}, {1.6*cos(\x)});
  }

  \foreach \h in {-1.2,-0.8,-0.4,0,0.4}{
    \draw[gray!50, thin, opacity=0.5]
      ({sqrt(1 - (\h/1.6)^2)*1.6}, {\h})
        arc[start angle=0, end angle=180,
            x radius={sqrt(1 - (\h/1.6)^2)*1.6},
            y radius={0.8}];
  }
\end{scope}

\coordinate (x)      at (-0.4,0.4);
\coordinate (phi1x)  at (0.6,0.40);
\coordinate (C)      at (0.6,0.85);


\draw[<-, very thick, blue]
  (phi1x)
    .. controls (0.25,0.25) and (0.,0.2) ..
  (x)
  node[below] at (0.15,0.3) {$T_1 F_1$};

\draw[->, very thick, orange]
  (phi1x)
    .. controls (0.65,0.45) and (0.65,0.65) ..
  (C)
  node[pos=0.55, right] {$T_2 F_2$};

\draw[->, very thick, green!60!black]
  (x)
    .. controls (0.,0.45) and (0.3,0.5) ..
  (C)
  node[left] at (0.2,0.65) {$T_1F_1 + T_2F_2$};

\fill (x)     circle (0.035) node[below left] {$x$};
\fill (phi1x) circle (0.035) node[below] {$\phi_1(x)$};
\fill (C)     circle (0.04)  node[above right] {$\phi_2(\phi_1(x))$};

\end{tikzpicture}
    \caption{\textbf{Effect of commuting vector fields on flow simplification} When the  vector fields commute, the sequential application of flows generated by $(F_1, T_1)$ and $(F_2, T_2)$ is equivalent to integrating a single combined vector field. Our formulation exploits this property to replace a sequence of $n$ ODEs with a single ODE producing the same transport.}
    \label{fig::commuting_flows}
\end{figure}
When the intrinsic dimension of the data manifold is unknown, one must determine an appropriate value of $m$ for the model. A natural strategy is to choose the smallest $m$ for which the loss can be driven to zero. The following result guarantees the validity of this approach: if $m$ is chosen too small, then the model lacks sufficient degrees of freedom to represent the manifold, and the loss remains strictly positive. Thus, vanishing loss occurs if and only if $m$ matches (or exceeds) the true intrinsic dimension.

\begin{theorem}[Main Characterization via Loss Minimization]
\label{thm:loss_unknown}
Let $M \in \mathcal{M}_m$, and suppose we have $k < m$. Let \(\{\phi_i\} \in \mathcal{F}\). Let $X \sim \rho$ be a random variable sampled 
from the distribution supported on $M$. There exists $c > 0$ such that $\mathcal{L}_k \geq c$ where:
\[
\mathcal{L}_k = \min_{\{T_i,F_i, \sigma_i\}_{i=1}^k, \, C} 
\mathbb{E}_{x \sim X} \Big[ \|\, C - (\phi_k \circ \cdots \circ \phi_1)(x) \,\|^2_2 \Big]
\]
\end{theorem}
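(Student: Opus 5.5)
The strategy is to argue by contradiction through a dimension-counting (measure) argument on the composed map $\Psi = \phi_k \circ \cdots \circ \phi_1$ restricted to $M$. Write $\Psi(x)$ explicitly, using the definition of $\Phi_F^T$. On $M$, each $\phi_i$ moves a point along one integral curve of $F_i$ for time $T_i(x)$. The point $\Psi(x)$ is therefore reached from $x$ by $k$ successive one-dimensional displacements. Conversely, $x$ is recovered from the endpoint $\Psi(x)$ together with the $k$ scalars $(T_1,\dots,T_k)$, evaluated along the trajectory, by integrating the fields backward:
\[
x = \Phi_{F_1}^{-t_1}\circ\cdots\circ\Phi_{F_k}^{-t_k}\big(\Psi(x)\big).
\]
The backward flows exist and are smooth because each $F_i$ is smooth and bounded, hence complete.

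The first step is the exact case. Suppose $\mathcal{L}_k = 0$. Then $\Psi(x) = C$ for $\rho$-almost every $x$. Since $\operatorname{supp}\rho = M$ and $\Psi$ is continuous ($F_i$, $T_i$ smooth), $\Psi \equiv C$ on all of $M$. The map
\[
G : [0,B]^k \to \mathbb{R}^n,\qquad G(t_1,\dots,t_k) = \Phi_{F_1}^{-t_1}\circ\cdots\circ\Phi_{F_k}^{-t_k}(C),
\]
where $B$ bounds the $T_i$, is smooth on a $k$-dimensional domain. By the inversion above, $M \subset G([0,B]^k)$. A smooth image of a $k$-dimensional set has Hausdorff dimension at most $k$, so $\mathcal{H}^m(G([0,B]^k)) = 0$ for $k<m$. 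But $M$ is an $m$-dimensional submanifold with nonempty interior chart, so $\mathcal{H}^m(M) > 0$. This is a contradiction.

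The second step upgrades this to a uniform positive lower bound. The worry is an infimizing sequence with loss tending to $0$. The plan is to use the constraint $\mathcal{L}_F g \succeq 0$ together with $|\langle\nabla\sigma,F\rangle|\le K$ and boundedness of $\sigma$. These give a non-contraction estimate: along $F_i$-flows, Euclidean distances can shrink by at most a factor $e^{-(\sup\sigma-\inf\sigma)/2}$, since $g$-lengths are nondecreasing and $g$ is uniformly comparable to the Euclidean metric. Thus each $\phi_i$, viewed as a flow map at fixed time, is bi-Lipschitz-controlled from below. The reverse map $(y,t)\mapsto \Phi^{-t}$ is then Lipschitz, with a constant uniform over the admissible class.

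Now take $\epsilon>0$ and suppose $\mathbb{E}\|C-\Psi(x)\|^2<\epsilon$. By Markov's inequality, a set $A\subset M$ with $\rho(A)\ge 1/2$ has $\Psi(A)\subset B(C,\sqrt{2\epsilon})$. Then $A$ lies in the image under a uniformly Lipschitz map $G$ of $B(C,\sqrt{2\epsilon})\times[0,B]^k$. Covering this set by $\delta$-cubes and counting shows that $A$ is covered by roughly $\delta^{-k}\,(1+\sqrt{\epsilon}/\delta)^{n}$ balls of radius $L\delta$. Choosing $\delta\sim\sqrt{\epsilon}$ gives $\mathcal{H}^m$-content $\lesssim \epsilon^{(m-k)/2}\to 0$. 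On the other hand, $\rho(A)\ge 1/2$ forces a positive lower bound on the volume of $A$. This needs a mild assumption that $\rho$ has a bounded density with respect to the volume of $M$, or else a compactness argument on $M$ giving $\operatorname{vol}(A)\ge c_\rho>0$. Balancing the two bounds yields an explicit $c>0$.

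The main obstacle is this second step, specifically making the Lipschitz constant of the inverse construction uniform across all admissible triples. The constraints give boundedness of $F$, $T$, $\sigma$ and the Lie-derivative sign, but not a priori uniform bounds on $\nabla F$. I would therefore first try to extract the needed Lipschitz control directly from non-contraction, which bounds forward contraction and hence backward expansion. If that fails, I would settle for the qualitative statement that the infimum is not attained at $0$, by taking the bounds in Definition~\ref{def::flows_metric_times} as fixed class constants.
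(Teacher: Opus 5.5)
Your route is genuinely different from the paper's and, in outline, sound. The paper argues forward. It iterates Lemma~\ref{thm:two_direction_noncollapse} to claim that each admissible flow collapses at most one intrinsic direction, so $M^{(k)}=\phi_k\circ\cdots\circ\phi_1(M)$ still contains a hyperrectangle with a side of length at least $v>0$. Hence $\operatorname{diam}(M^{(k)})\ge v$, which the paper then says forces a positive mean squared deviation from every $C$. You argue backward, writing each $x\in M$ as $G(\Psi(x),t(x))$ with $G(y,t)=\Phi_{F_1}^{-t_1}\circ\cdots\circ\Phi_{F_k}^{-t_k}(y)$, and count dimensions. This separates two issues. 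Your Step 1 shows that exact collapse is impossible for purely dimensional reasons. It uses only smoothness and completeness of the flows and boundedness of the $T_i$, not $\mathcal{L}_F g\succeq 0$. The Lie-derivative constraint is needed only for the uniform bound, where it enters through the Lipschitz constants of $G$: $e^{(\sup\sigma_i-\inf\sigma_i)/2}$ in space for each backward flow, and $\|F_i\|_\infty$ in time. So no control of $\nabla F_i$ is required, and the obstacle you raise at the end does not arise, provided the bounds on $\sigma_i$, $F_i$, $T_i$ are fixed class constants. The paper tacitly assumes the same when it says $c$ depends only on $M$ and the admissible bounds. Your argument also gives an explicit rate $\epsilon^{(m-k)/2}$ and shows how $c$ depends on $\rho$; the paper's last step, from a diameter bound to a bound on the expectation, is not quantified. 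In exchange, the paper's route gives the rank-drop picture it reuses for tangency in Theorem~\ref{thm:loss_known}, which your argument does not address.

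One step fails as written: for general $\rho$, no compactness argument gives $\operatorname{vol}(A)\ge c_\rho>0$ from $\rho(A)\ge 1/2$. Take $\rho$ purely atomic on a countable dense subset of $M$ (so $\operatorname{supp}\rho=M$), with one atom of mass at least $1/2$. Then arbitrarily small neighbourhoods of that atom have $\rho$-mass at least $1/2$. You do not need the bounded-density hypothesis, though; use density of $A$ in $M$ instead of its volume.

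By full support and compactness, $\beta(\delta):=\inf_{x\in M}\rho(B(x,\delta))>0$. If the loss is below $\beta(\delta)\delta^2/2$, Markov's inequality gives $\rho(M\setminus A)<\beta(\delta)$ for $A=\{x\in M:\|\Psi(x)-C\|<\delta\}$. So $A$ meets every ball $B(x,\delta)$ with $x\in M$. Your covering of $G(B(C,\delta)\times[0,B]^k)$ then covers $M$ by $\lesssim(B/\delta)^k$ balls of radius $\lesssim(L+1)\delta$. On the other hand, $\operatorname{vol}(M)>0$ and the upper regularity $\operatorname{vol}_M(M\cap B(p,r))\le C_M r^m$ of a compact embedded submanifold require $\gtrsim\delta^{-m}$ such balls. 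Fixing $\delta$ small enough gives a contradiction, hence $\mathcal{L}_k\ge\beta(\delta)\delta^2/2$. The same upper-regularity estimate is also what you need in the bounded-density version to pass from your covering bound to a bound on $\operatorname{vol}(A)$.
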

All proofs of the theorems are provided in the appendix \ref{app:proof}.

\subsection{Training Formulation}
\paragraph{Commutativity-induced flow} Evaluating the composite flow $\phi_m \circ \cdots \circ \phi_1$ requires integrating $m$ distinct Neural ODEs 
in sequence, which becomes increasingly expensive as $m$ grows. By enforcing that the field line bundles commute (see, the proof of Frobenius integrability theorem - \citealp[Theorem 19.12 Frobenius]{lee_introduction_2012}), the composition of flows can be equivalently evaluated as a single flow. As illustrated in Figure~\ref{fig::commuting_flows},  flowing first along $F_1$ for time $T_1$ and then along $F_2$ for time $T_2$ reaches the same point $C$ as integrating the single combined field $T_1 F_1 + T_2 F_2$. This geometric property of commuting line bundles is analogous to moving along orthogonal directions on a manifold. It ensures that ordered compositions of flows can be replaced by a single autonomous flow. Consequently, the entire sequence of $m$ Neural ODEs can result into the integration of one vector field without changing the resulting transport. Theorem~\ref{thm:equiv_commuting} formalizes this property and reformulates the frame learning loss in Theorem~\ref{thm:loss_known} according to  a single Neural ODE.

\begin{theorem}[Equivalent Loss under Commuting Flows]
\label{thm:equiv_commuting}
Let $T:\mathbb{R}^n\rightarrow\mathbb{R}_{+}^m$ be defined componentwise by $T(x)=(T_1(x), ..., T_m(x))$, let $F:\mathbb{R}^n\rightarrow\mathbb{R}^{n\times m}$ be defined as $F(x)=(F_1(x), ..., F_m(x))$ and let $\sigma:\mathbb{R}^n\rightarrow\mathbb{R}^{m}$ be defined as $\sigma(x)=(\sigma_1(x), ..., \sigma_m(x))$. Assume that the vector fields $F_1,\dots,F_m$ commute and satisfy 
$JT(x)\,F(x)+I_m = 0_m$.  
Then the loss in Theorem~\ref{thm:loss_known} can be rewritten as:
\[
\mathcal{L}_m 
=
\min_{T, F, \sigma, \, C} 
\mathbb{E}_{x \sim X}\! \lim_{t \rightarrow \infty} \left[
  \big\|\, C - \phi^{t}_{\sum T_i F_i}(x) \,\big\|_2^2
\right],
\]
\end{theorem}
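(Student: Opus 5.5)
We will show that, under commutativity and the normalization $JT\,F + I_m = 0$, the sequential composite $\phi_m\circ\cdots\circ\phi_1(x)$ equals $\lim_{t\to\infty}\phi^t_{G}(x)$ for the single field $G(x)=\sum_i T_i(x)F_i(x)$. The loss identity then follows by substituting this equality pointwise into the objective of Theorem~\ref{thm:loss_known}.

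\textbf{Step 1: commuting flows compose into a multi-time flow.} Since $[F_i,F_j]=0$, the flows commute: $\Phi^{s}_{F_i}\circ\Phi^{r}_{F_j}=\Phi^{r}_{F_j}\circ\Phi^{s}_{F_i}$ \citep[Thm.~9.44]{lee_introduction_2012}. Boundedness of the $F_i$ gives completeness, so the map $\Psi(s_1,\dots,s_m;x)=\Phi^{s_m}_{F_m}\circ\cdots\circ\Phi^{s_1}_{F_1}(x)$ is well defined. It is independent of the ordering and satisfies $\partial_{s_i}\Psi=F_i(\Psi)$.

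\textbf{Step 2: the horizons transform along the other flows.} The condition $JT\,F=-I_m$ says $\langle\nabla T_i,F_j\rangle=-\delta_{ij}$. Hence along $\Psi$ we have $T_i(\Psi(s;x))=T_i(x)-s_i$. It follows that the sequential composite with state-dependent horizons, where $\phi_i$ runs for time $T_i$ evaluated at the current point, equals $\Psi(T_1(x),\dots,T_m(x);x)$. This holds because each later $T_i$ is unchanged by the earlier flows $F_j$, $j\neq i$. The endpoint $y^\ast$ satisfies $T(y^\ast)=0$.

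\textbf{Step 3: identify the combined flow.} Let $z(t)=\phi^t_G(x)$. We compute
\[
\frac{d}{dt}T_i(z(t))=\sum_j T_j\langle\nabla T_i,F_j\rangle=-T_i(z(t)),
\]
so $T_i(z(t))=e^{-t}T_i(x)$. Guess $z(t)=\Psi(s(t);x)$ with $s_i(t)=T_i(x)(1-e^{-t})$. Then
\[
\dot z=\sum_i \dot s_iF_i(z)=\sum_i T_i(x)e^{-t}F_i(z)=\sum_i T_i(z)F_i(z)=G(z),
\]
using Step 2, and $z(0)=x$. By ODE uniqueness this is the trajectory of $G$. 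Letting $t\to\infty$ gives $s\to T(x)$, so by continuity of $\Psi$, $\lim_t\phi^t_G(x)=\Psi(T(x);x)=\phi_m\circ\cdots\circ\phi_1(x)$. This explains the $\lim_{t\to\infty}$ in the statement: the combined flow approaches the target only asymptotically, which resolves the apparent mismatch with the finite-time picture of Figure~\ref{fig::commuting_flows}.

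\textbf{Step 4: conclude and identify the obstacle.} Substituting the pointwise equality into the expectation yields the claimed formula. The hard part is the domain of validity of the transversality condition. In Definition~\ref{def::flows_metric_times} it is imposed only on $M$, so Step 2 needs the trajectories $\Psi(s;x)$ with $0\le s\le T(x)$ to stay where $JT\,F=-I$ holds. We would argue this by tangency of the $F_i$ to $M$, borrowed from Theorem~\ref{thm:loss_known} at minimizers, or else simply assume the identity on an invariant neighbourhood, as the theorem's hypothesis ``$JT(x)F(x)+I_m=0$'' suggests for all $x$. A further subtlety is that $T_i\ge 0$ is preserved along the flow, since $T_i$ decays as $e^{-t}T_i$. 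This keeps all times nonnegative, consistent with $T:\mathbb{R}^n\to\mathbb{R}^m_+$.
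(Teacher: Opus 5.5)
Your proposal is correct and follows essentially the paper's route. Your curve $z(t)=\Psi\bigl(T(x)(1-e^{-t});x\bigr)$ is exactly the paper's homotopy $H(s,x)=\phi^{F_m}_{sT_m(x)}\circ\cdots\circ\phi^{F_1}_{sT_1(x)}(x)$ after the time change $s=1-e^{-t}$, and both arguments use $\langle\nabla T_i,F_j\rangle=-\delta_{ij}$ to show this path is the integral curve of $\sum_i T_iF_i$, reaching the composite only as $t\to\infty$. You verify a guessed solution where the paper derives the ODE and reparametrizes, but it is the same computation.

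Your version improves on the appendix in a few places:
\begin{itemize}
\item Your signs are correct where the paper's are garbled: it writes $\log 2$, $\log(1+s)$ and $s=e^t+1$ for what should be $s=1-e^{-t}$.
\item You check that the state-dependent horizons reduce to $T_i(x)$, which the paper's proof uses without comment.
\item You identify that $JT\,F=-I_m$ is needed precisely on the sets $\{\Psi(s;x):0\le s\le T(x)\}$. That local reading is in fact the only consistent one: with $T_i\ge 0$ and $F_i$ bounded (hence complete), the identity $\langle\nabla T_i,F_i\rangle=-1$ cannot hold on all of $\mathbb{R}^n$. So your fallback of assuming it ``for all $x$'' would make the statement vacuous.
\end{itemize}
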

where $\phi$ denotes the flow defined in Definition~\ref{def::flow}. The proof of the theorem is provided in Appendix~\ref{app:proof}.

\paragraph{Integration-free training via path-wise convergence.} Reducing the computation to a single autonomous Neural ODE significantly alleviates the burden compared to sequentially integrating $m$ flows.  However, the resulting ODE must be evolved to infinite time in order to reach its equilibrium, which is not achievable in practice and leads to numerical and optimization difficulties.

To circumvent this issue, and inspired by flow--matching schemes \cite{metric_flow_matching}, we introduce an interpolating curve 
\(
u_s : [0,1] \times \mathbb{R}^n \to \mathbb{R}^n
\) 
whose endpoints are fixed to be the input $x$ and output $C$, and we match the \emph{instantaneous velocity} of this curve with the velocity
generated by the autonomous vector field. Crucially, the first argument of $u_s$ indexes the trajectory location rather than the physical time: while the ODE converges to its endpoint only asymptotically in time, learning is performed on the geometry of paths in the state space and is independent of this temporal convergence.

In other words, rather than requiring the numerical trajectory to land exactly at $C$,
we only require its speed at each time $t$ to align with the speed of a known reference path.
This removes the dependency on high-precision ODE integration 
and produces a much more stable optimization problem.

Let us introduce a neural network 
\(s : [0,1] \times \mathbb{R}^n \times \mathbb{R}^n \to \mathbb{R}^n\)
parameterizing the interpolating curve \(u_s\), defined by
\begin{align}
\label{trajectory}
u_s(t,x, C) = (1-t)\,x + t\,C + t(1-t)\,s(t,x).    
\end{align}

which satisfies the boundary conditions \(u_s(0,x, C)=x\) and \(u_s(1,x, C)=C\). 

Instead of supervising only the endpoint, we match the instantaneous velocity of the flow
to the time derivative of the reference path \(u_s(t,x, C)\) and we rewrite 
the loss of Theorem~\ref{thm:equiv_commuting} as
\begin{multline*}
  \min_{T,F,s, C}\; \mathbb{E}_{x \sim X}\!\int_0^1 L_c(F,T,s,x, t) dt, \\
  L_c(F,T,s,x, t) = \Bigl\|F\!\big(u_s(t,x, C)\big)\, T\!\big(u_s(t,x, C)\big) \\
  -\partial_t u_s(t,x,C) \Bigr\|_2^2.
\end{multline*}

This integral form is conceptually useful but not required for computation, since the objective is driven to zero.  For efficiency, the evaluation time $t$ is instead sampled uniformly, yielding the  stochastic objective:
\begin{multline}
\label{loss::collapse}
  \min_{T,F,s, C}\; \mathbb{E}_{x \sim X}\;\mathbb{E}_{t \sim \mathbb{U}_{[0,1]}} \quad  L_c(F,T,s,x, t)
\end{multline}

\paragraph{Enforcing positive semidefiniteness of the Lie derivative} To enforce the positive semidefiniteness (PSD) of the Lie derivative of the metric, directly computing the full Jacobian matrix of each vector field is computationally prohibitive and does not scale to high-dimensional data. Instead, we rely on a trace-based surrogate that avoids explicit Jacobian construction. Concretely, we regularize:
\begin{equation}
\label{loss::eigen}
\begin{aligned}
&\mathcal{R}_{\lambda}(x,F_i,T_i, \sigma_i) = h\left(\Lambda\!\left(A(F_i,\sigma_i,x)\right), T_i(x)\right),    \\
&A(F_i,\sigma_i,x) = \mathrm{sgn}(T_i(x)) ( JF_i(x) + JF_i^\top(x) \\
& \qquad \qquad \qquad \qquad \qquad \qquad + <\nabla \sigma_i, F_i(x)> I_n ), \\
\end{aligned}
\end{equation}
where $\Lambda(\cdot)$ denotes the eigenvalues of the symmetric matrix
$A(F_i,\sigma_i,x)$, capturing the aggregate local metric expansion
induced by the vector field $F_i$ and $h$ is a function to penalize negative eigenvalues in order to enforce positive semidefiniteness. Each regularization term is rescaled by the corresponding time function $T(x)$. It prevents degenerate solutions in which the vector field magnitude is arbitrarily reduced while the time horizon is increased proportionally to satisfy the regularization. In our theoretical framework, we assume that the time functions are positive. This constraint can be relaxed by enforcing that the Lie derivative of the metric is positive semidefinite for forward time and negative semidefinite for backward time. See Appendix~\ref{app:exp_setup} for computational details and definition of the function $h$.

In our experiments, we randomly sample a single vector field per minibatch when evaluating the regularization term. This stochastic approximation significantly reduces the computational cost while remaining stable in practice.

Furthermore, we control the contractiveness of the vector field by regularizing the conformal factor $\sigma_i$ as follows:
\begin{align}
\label{loss::metric}
\mathcal{R}_{metric} = m \big( \langle \nabla \sigma_i, F_i \rangle T_i\big)^2 .    
\end{align}

\paragraph{Time and commutativity regularization.} As in the regularization term enforcing positivity of the Lie derivative, we rescale the time and commutativity regularization terms by the time function $T_i$. This rescaling improves the stability of the model and leads to robust weighting choices throughout our experiments. 
\begin{align}
    \label{loss::commute}
    \mathcal{R}_{commute}(x, F, T) = \sum_{i \neq j} \Big\|[F_i(x), F_j(x)]\Big\|_2^2 T_i(x) \, T_j(x),
\end{align}
where $[.,.]$ denotes the Lie bracket (see Appendix~\ref{app:background}).
\begin{equation}
\label{loss::time}    
\begin{aligned}
   &\mathcal{R}_{time}(x,F,T) = \Big\|S(x,F,T) (J T(x)\, F(x) + \mathrm{I}_m) \Big\|_2^2, \\
   &S(x,F,T)_{i,j} = \delta_{ij} + (1- \delta_{ij}) ||F_j(x)||_2^{-1} ||\nabla T_i(x)||_2^{-1}.
\end{aligned}
\end{equation}

\paragraph{Overall objective.}
The full training objective (see Algorithm~\ref{alg:loss}) combines a flow-matching term with four geometric regularization terms controlling contraction and enforcing commutativity and temporal consistency.
\begin{algorithm}[ht]
\caption{Training loss evaluation and update}
\label{alg:loss}
\begin{algorithmic}[1]
\REQUIRE Batch $\{x^{(b)}\}_{b=1}^B$, vector fields $F=\{F_i\}_{i=1}^m$, time maps $T=\{T_i\}_{i=1}^m$, conformal factors $\sigma ={\sigma_i}$, field $s$, reference point $C \in \mathbb{R}^n$
\FOR{$b=1$ \textbf{to} $B$}
    \STATE Sample $i \sim \mathcal{U}\{1,\dots,m\}$ and $t \sim \mathcal{U}[0,1]$
    \STATE Compute the trajectory point $z \leftarrow u_s(t, x^{(b)}, C)$ (defined in \ref{trajectory})
    \STATE Compute losses:
    \begin{align*}
    \mathcal{L} \leftarrow &L_c(F,T,s,x^{(b)}) \qquad &\text{(defined in \ref{loss::collapse})} \\
    + \alpha \; &\mathcal{R}_{\lambda}(z,F_i,T_i, \sigma_i)\qquad &\text{(defined in \ref{loss::eigen})} \\
    + \beta \; &\mathcal{R}_{\mathrm{commute}}(z,F,T) &\text{(defined in \ref{loss::commute})}\\
    + \zeta \; &\mathcal{R}_{\mathrm{time}}(z,F,T) &\text{(defined in \ref{loss::time})} \\
    + \eta \; &\mathcal{R}_{metric}(z,F_i,\sigma_i) &\text{(defined in \ref{loss::metric})}
    \end{align*}
\ENDFOR
\STATE Update parameters of $(F,T,\sigma,s,C)$ by backpropagating $\frac{1}{B}\sum_{b=1}^B \mathcal{L}$
\end{algorithmic}
\end{algorithm}
In all experiments, the regularization coefficients $\alpha$, $\beta$, and $\zeta$ are set to 1 and $\sigma_i = 0$ and $\eta=0$. We observe that the regularization factors have little influence (see Appendix~\ref{app:metric_sensitivity}).

\paragraph{Coordinate system.} To construct a global coordinate system, we use the learned time functions as a proxy. These functions represent the flow time required to reach a reference point, which coincides with the exact distance whenever the vector field has unit norm.

\section{Experiments}

The experiments are designed to illustrate the behavior of the proposed method,
empirically validate its theoretical properties, and assess its practical
scalability. Our method addresses a setting that is not directly covered by existing
approaches, namely the unsupervised learning of global coordinate structures via
ambient-space vector fields, under explicit geometric constraints.
As a result, the experiments are designed to (i) verify the necessity and effect
of the proposed regularization terms in controlled settings, (ii) illustrate
the geometric behavior of the learned vector fields on both parallelizable and
non-parallelizable manifolds, and (iii) demonstrate that the approach scales to
high-dimensional image data. Architectural details, and training procedures are provided in Appendix~\ref{app:exp_setup}.

\paragraph{Linear manifold sanity check.}We first consider a linear manifold defined as a uniformly sampled 3-dimensional plane embedded in a 4-dimensional ambient space. This controlled setting allows us to directly assess the necessity of learning the correct number of vector fields.

\begin{figure}[h!]
\centering
\begin{tikzpicture}[scale=1.8]
\begin{axis}[
    xlabel=\# vector fields,
	ylabel=Loss value,
    ylabel style = {font=\fontsize{5pt}{5pt}\selectfont, yshift=-0.1cm},
    xlabel style = {font=\fontsize{5pt}{5pt}\selectfont, yshift=0.1cm},
    xticklabel style = {font=\fontsize{5pt}{5pt}\selectfont},
    yticklabel style = {font=\fontsize{5pt}{5pt}\selectfont},
    ymode=log,
    ytick={1e-8,1e-6,1e-4,1e-2,1e0},
	grid=both,
	minor grid style={gray!25},
	major grid style={gray!25},
	width=0.45\linewidth,
    axis line style={white},
    xtick pos=left,
    ytick pos=left,
    legend style={at={(-0.1,-0.1)},anchor=north east,nodes={scale=0.4, transform shape}},
    axis background/.style={fill=BackgroundGraph}]
\addplot[dashed,mark=*,mark options={scale=0.5,solid},color=BlueCustom] coordinates {
    (1, 0.042)
    (2, 0.015)
    (3, 3.92E-6)
    (4, 1.22E-6)
};
\addlegendentry{$\mathcal{L}_F g  \succeq 0 $}

\addplot[dashed,mark=*,mark options={scale=0.5,solid},color=RedCustom] coordinates {
    (1, 1.7E-8)
    (2, 1.7E-8)
    (3, 1.7E-8)
    (4, 1.7E-8)
};
\addlegendentry{Ablation}

\end{axis}
\end{tikzpicture}
\caption{Linear manifold in $\mathbb{R}^4$: Frame learning loss versus number of learned vector fields $m$ for 3-dimensional plane dataset.}
\label{fig:linear-manifold-loss}
\end{figure}
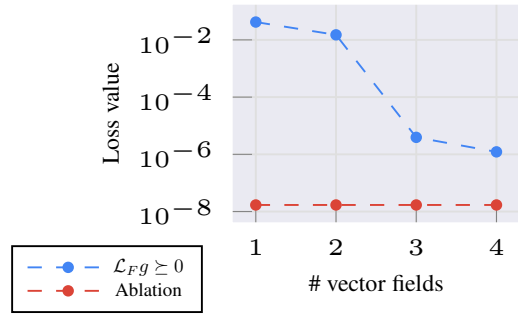

To validate Theorem~\ref{thm:loss_known} and Theorem~\ref{thm:loss_unknown}, we vary the number of learned vector fields $m \in \{1,2,3, 4\}$. We also perform an ablation study in which the Lie-derivative regularization is disabled. The resulting frame learning loss is
reported in Figure~\ref{fig:linear-manifold-loss}. When the number of vector fields does not match the intrinsic dimension of the manifold, the loss cannot vanish. In contrast, without the regularization term, a single vector field suffices to collapse the manifold and achieve zero loss despite the dimensional mismatch, yielding a degenerate and geometrically meaningless solution. This experiment highlights the need for geometric constraints and an appropriate number of vector fields to achieve near-zero loss up to numerical error.

\paragraph{Curved manifolds in $\mathbb{R}^3$.}
We next study 2-dimensional manifolds embedded in 3-dimensional space (Table~\ref{table_toys}): sphere, torus, Swiss roll and Hyperbolic paraboloid surface. These examples allow us to evaluate both tangency and global consistency of the learned vector fields. Tangency is assessed by computing the cosine similarity between the learned vector fields and the analytical normal vectors of the manifolds. Across all examples (Table~\ref{table_toys}), the learned vector fields are nearly orthogonal to the normal directions, confirming that they are tangent to the manifold.

\begin{table}[ht!]
\centering
\begin{small}
\begin{tabular}{lcccr}
\toprule
Data set & Ours & Iso AE & Parallelizable & Flat \\
\midrule
Paraboloid & $\underline{\boldsymbol{0.55^\circ} \pm 0.07}$ & $2.17^\circ$ & $\textcolor{green}{\checkmark}$ & $\textcolor{green}{\checkmark}$\\
Swiss Roll  & $3.11^\circ \pm 1.05$ & $\underline{\boldsymbol{1.75^\circ}}$ & $\textcolor{green}{\checkmark}$ & $\textcolor{green}{\checkmark}$\\
Sphere  & $\underline{\boldsymbol{0.97^\circ} \pm 0.11}$ & $29.90^\circ$ & $\textcolor{red}{\mathsf{\times}}$ & $\textcolor{red}{\mathsf{\times}}$\\
Torus  & $\underline{\boldsymbol{6.93^\circ} \pm 0.7}$ & $24.07^\circ$ & $\textcolor{green}{\checkmark}$ & $\textcolor{red}{\mathsf{\times}}$\\
\bottomrule
\end{tabular}
\end{small}
\vspace{0.2cm}
\caption{\textbf{Evaluation.} Angular error (in degrees) between the estimated and ground-truth tangent spaces on synthetic datasets. An isometric auto-encoder (Iso AE) is used as a baseline.}
\label{table_toys}
\end{table}

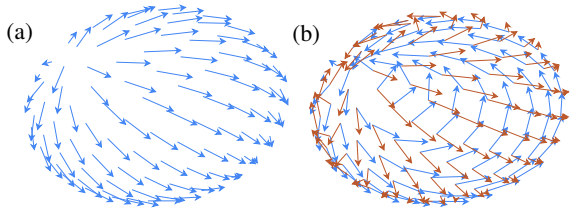
\begin{figure}[h]
\centering
\begin{subfigure}{0.45\columnwidth}
\centering
\begin{tikzpicture}[scale=2.5]
\node[font=\small] at (0.4,1.3) {(a)};
\begin{axis}[
    width=\linewidth,
    height=0.9\linewidth,
    view={25}{25},
    axis lines=none,
    ticks=none,
    enlargelimits=0.1,
    clip=false,
]
\addplot3[
    quiver={
        u=\thisrow{u_all},
        v=\thisrow{v_all},
        w=\thisrow{w_all},
        scale arrows=0.3,
    },
    BlueCustom,
    line width=0.1pt,
    -{Stealth[length=1pt,width=1pt]},
] table[
    col sep=comma,
    x=X,
    y=Y,
    z=Z
] {vector_field_sphere_front.csv};
\end{axis}
\end{tikzpicture}
\end{subfigure}
\begin{subfigure}{0.45\columnwidth}
\centering
\begin{tikzpicture}[scale=2.5]
\node[font=\small] at (0.4,1.3) {(b)};
\begin{axis}[
    width=\linewidth,
    height=0.9\linewidth,
    view={25}{25},
    axis lines=none,
    ticks=none,
    enlargelimits=0.1,
    clip=false,
]
\addplot3[
    quiver={
        u=\thisrow{u_1},
        v=\thisrow{v_1},
        w=\thisrow{w_1},
        scale arrows=0.3,
    },
    BlueCustom,
    line width=0.1pt,
    -{Stealth[length=1pt,width=1pt]},
] table[
    col sep=comma,
    x=X,
    y=Y,
    z=Z
] {vector_field_sphere_front.csv};
\addplot3[
    quiver={
        u=\thisrow{u_2},
        v=\thisrow{v_2},
        w=\thisrow{w_2},
        scale arrows=0.3,
    },
    OrangeCustom,
    line width=0.1pt,
    -{Stealth[length=1pt,width=1pt]},
] table[
    col sep=comma,
    x=X,
    y=Y,
    z=Z
] {vector_field_sphere_front.csv};
\end{axis}
\end{tikzpicture}
\end{subfigure}
\caption{\textbf{Learning tangent spaces on the sphere} (a) The combined vector field $F\,T$. Consistent with the Hairy Ball Theorem, no smooth global tangent vector field exists on the sphere, resulting in singularities near the poles. (b) The two learned tangent vector fields defining local tangent directions.}
\label{fig::vector_field_sphere}
\end{figure}

This experiment also illustrates the behavior of our method when a global
coordinate system does not exist. In particular, the sphere is not
parallelizable and does not admit a smooth global frame. As shown in
Figure~\ref{fig::vector_field_sphere}, singularities emerge near the poles, in accordance with the Hairy Ball
Theorem, while the vector fields remain well-defined almost everywhere else on the manifold.

 \paragraph{CIFAR-10} To evaluate scalability, we apply our method to the CIFAR-10 dataset. Following \cite{brown2023verifying}, we consider under the hypothesis that the data lie on a manifold of intrinsic dimension $20$. We freeze the learned time functions and train a lightweight classifier on top,
without data augmentation. We compare this approach against an autoencoder baseline with the same architecture and training protocol. First, although the eigenvalue-based constraints are computationally demanding, the method remains scalable, as evidenced by the reported training-time runtime and memory costs. Secondly, both methods achieve comparable classification accuracy \ref{table_cifar10}.

\begin{table}[h!]
\begin{center}
\begin{small}
\begin{sc}
\begin{tabular}{lcccr}
\toprule
Data set & ours & Isometric  AE & AE \\
\midrule
Accuracy  & $42.0\%$ & $22.8\%$ & $45.0\%$ \\
Training time  & $2.03s$ & $0.09s$ & $0.008s$ \\
Memory & $11.3$ GiB & $5.4$ GiB & $0.5$ GiB \\ 
\bottomrule
\end{tabular}
\end{sc}
\end{small}
\end{center}
\caption{\textbf{Evaluation on CIFAR-10.} Comparison of classification accuracies between
our method and an autoencoder baseline, trained without data augmentation. Reported times (in seconds) and memory consumption (in GiB) correspond to the backward pass for a single batch of size 100 measured on a RTX4090.}
\label{table_cifar10}
\end{table}
As expected, the frame training loss does not converge to 0 (\(\sim\!0.05\) per dimension), reflecting that CIFAR-10 is better modeled as a collection of distinct manifolds rather than a single globally parameterizable one \cite{brown2023verifying}, yet the  representation remains useful for downstream tasks.

\section{Limitations and Future Work}
While the present work focuses on a specific geometric setting, it naturally opens several directions for extension. First, the method assumes the existence of a global coordinate map, an assumption that does not hold for many real-world datasets. A natural extension is to learn multiple local reference systems, which can be viewed as a geometric analogue of extending VAEs to VQ-VAEs. Second, our approach is designed to learn the geometry of the manifold but does not explicitly model the data density along it. Incorporating density estimation would enable generative modeling and likelihood-based evaluation.  Finally, designing neural architectures that satisfy Lie derivative constraints by construction would be a promising way to reduce training cost and improve scalability.


\section{Conclusion}

We introduced a new unsupervised learning framework for recovering global
coordinate structures of data manifolds by learning ambient-space vector fields
under explicit geometric constraints. Unlike classical manifold learning
approaches that focus on local embeddings, our method directly models intrinsic
geometry and global structure. The framework is grounded in differential
geometry, supported by theoretical guarantees, and validated empirically on both
controlled synthetic manifolds and high-dimensional image data. By demonstrating
that such geometric principles can be enforced at scale, this work opens a
promising direction for geometry-aware representation learning beyond local or
purely embedding-based methods.

\section*{Acknowledgements}
This work was carried out within the DEEL project,\footnote{\url{https://www.deel.ai/}} which is part of IRT Saint Exupéry and the ANITI AI cluster. This work is also carried out in collaboration with the SequoIA AI Cluster, in particular via AI chair GENESIS. The authors acknowledge the financial support from DEEL's Industrial and Academic Members and the France 2030 program – Grant agreements n°ANR-10-AIRT-01, n°ANR-23-IACL-0002 and ANR-23-IACL-0009.

\section*{Impact Statement}
This paper presents work whose goal is to advance the field of Machine Learning. There are many potential societal consequences of our work, none which we feel must be specifically highlighted here.

\bibliographystyle{icml2026}
\bibliography{references}

\begin{thebibliography}{39}
\providecommand{\natexlab}[1]{#1}
\providecommand{\url}[1]{\texttt{#1}}
\expandafter\ifx\csname urlstyle\endcsname\relax
  \providecommand{\doi}[1]{doi: #1}\else
  \providecommand{\doi}{doi: \begingroup \urlstyle{rm}\Url}\fi

\bibitem[Belkin \& Niyogi(2003)Belkin and Niyogi]{spectral_embeddings}
Belkin, M. and Niyogi, P.
\newblock Laplacian eigenmaps for dimensionality reduction and data representation.
\newblock \emph{Neural Computation}, 15\penalty0 (6):\penalty0 1373--1396, 2003.
\newblock \doi{10.1162/089976603321780317}.

\bibitem[B{\'e}thune et~al.(2025)B{\'e}thune, Vigouroux, Du, VanRullen, Serre, and Boutin]{bethune2025follow}
B{\'e}thune, L., Vigouroux, D., Du, Y., VanRullen, R., Serre, T., and Boutin, V.
\newblock Follow the energy, find the path: Riemannian metrics from energy-based models.
\newblock In \emph{The Thirty-ninth Annual Conference on Neural Information Processing Systems}, 2025.
\newblock URL \url{https://openreview.net/forum?id=BOiQ7Kd5Lx}.

\bibitem[Bronstein et~al.(2017)Bronstein, Bruna, LeCun, Szlam, and Vandergheynst]{beyond_euclidiean}
Bronstein, M.~M., Bruna, J., LeCun, Y., Szlam, A., and Vandergheynst, P.
\newblock Geometric deep learning: Going beyond euclidean data.
\newblock \emph{IEEE Signal Processing Magazine}, 34\penalty0 (4):\penalty0 18--42, 2017.
\newblock \doi{10.1109/MSP.2017.2693418}.

\bibitem[Brown et~al.(2023)Brown, Caterini, Ross, Cresswell, and Loaiza-Ganem]{brown2023verifying}
Brown, B.~C., Caterini, A.~L., Ross, B.~L., Cresswell, J.~C., and Loaiza-Ganem, G.
\newblock Verifying the union of manifolds hypothesis for image data.
\newblock In \emph{The Eleventh International Conference on Learning Representations}, 2023.
\newblock URL \url{https://openreview.net/forum?id=Rvee9CAX4fi}.

\bibitem[Chang et~al.(2019)Chang, Roohi, and Gao]{NEURIPS2019_2647c1db}
Chang, Y.-C., Roohi, N., and Gao, S.
\newblock Neural lyapunov control.
\newblock In Wallach, H., Larochelle, H., Beygelzimer, A., d\textquotesingle Alch\'{e}-Buc, F., Fox, E., and Garnett, R. (eds.), \emph{Advances in Neural Information Processing Systems}, volume~32. Curran Associates, Inc., 2019.
\newblock URL \url{https://proceedings.neurips.cc/paper_files/paper/2019/file/2647c1dba23bc0e0f9cdf75339e120d2-Paper.pdf}.

\bibitem[Chen et~al.(2019)Chen, Behrmann, Duvenaud, and Jacobsen]{lipswish_act}
Chen, R. T.~Q., Behrmann, J., Duvenaud, D., and Jacobsen, J.-H.
\newblock \emph{Residual flows for invertible generative modeling}.
\newblock Curran Associates Inc., Red Hook, NY, USA, 2019.

\bibitem[Chen et~al.(2020)Chen, Kornblith, Norouzi, and Hinton]{chen2020simclr}
Chen, T., Kornblith, S., Norouzi, M., and Hinton, G.
\newblock A simple framework for contrastive learning of visual representations.
\newblock In \emph{Proceedings of the 37th International Conference on Machine Learning}, ICML'20. JMLR.org, 2020.

\bibitem[Coifman \& Lafon(2006)Coifman and Lafon]{COIFMAN20065}
Coifman, R.~R. and Lafon, S.
\newblock Diffusion maps.
\newblock \emph{Applied and Computational Harmonic Analysis}, 21\penalty0 (1):\penalty0 5--30, 2006.
\newblock ISSN 1063-5203.
\newblock \doi{https://doi.org/10.1016/j.acha.2006.04.006}.
\newblock URL \url{https://www.sciencedirect.com/science/article/pii/S1063520306000546}.
\newblock Special Issue: Diffusion Maps and Wavelets.

\bibitem[Courts \& Kvinge(2022)Courts and Kvinge]{courts2022bundle}
Courts, N. and Kvinge, H.
\newblock Bundle networks: Fiber bundles, local trivializations, and a generative approach to exploring many-to-one maps.
\newblock In \emph{International Conference on Learning Representations}, 2022.
\newblock URL \url{https://openreview.net/forum?id=aBXzcPPOuX}.

\bibitem[de~Kruiff et~al.(2025)de~Kruiff, Bekkers, {\"O}ktem, Sch{\"o}nlieb, and Diepeveen]{kruiff2025pullback}
de~Kruiff, F., Bekkers, E.~J., {\"O}ktem, O., Sch{\"o}nlieb, C.-B., and Diepeveen, W.
\newblock Pullback flow matching on data manifolds, 2025.
\newblock URL \url{https://openreview.net/forum?id=mBXLtNKpeQ}.

\bibitem[Diepeveen \& Needell(2025)Diepeveen and Needell]{diepeveen2025manifoldlearningnormalizingflows}
Diepeveen, W. and Needell, D.
\newblock Manifold learning with normalizing flows: Towards regularity, expressivity and iso-riemannian geometry, 2025.
\newblock URL \url{https://arxiv.org/abs/2505.08087}.

\bibitem[Diepeveen et~al.(2025)Diepeveen, Batzolis, Shumaylov, and Sch\"{o}nlieb]{pmlr-v267-diepeveen25a}
Diepeveen, W., Batzolis, G., Shumaylov, Z., and Sch\"{o}nlieb, C.-B.
\newblock Score-based pullback {R}iemannian geometry: Extracting the data manifold geometry using anisotropic flows.
\newblock In Singh, A., Fazel, M., Hsu, D., Lacoste-Julien, S., Berkenkamp, F., Maharaj, T., Wagstaff, K., and Zhu, J. (eds.), \emph{Proceedings of the 42nd International Conference on Machine Learning}, volume 267 of \emph{Proceedings of Machine Learning Research}, pp.\  13746--13773. PMLR, 13--19 Jul 2025.
\newblock URL \url{https://proceedings.mlr.press/v267/diepeveen25a.html}.

\bibitem[Donahue et~al.(2017)Donahue, Kr{\"a}henb{\"u}hl, and Darrell]{donahue2017adversarial}
Donahue, J., Kr{\"a}henb{\"u}hl, P., and Darrell, T.
\newblock Adversarial feature learning.
\newblock In \emph{International Conference on Learning Representations}, 2017.
\newblock URL \url{https://openreview.net/forum?id=BJtNZAFgg}.

\bibitem[Dumoulin et~al.(2017)Dumoulin, Belghazi, Poole, Lamb, Arjovsky, Mastropietro, and Courville]{dumoulin2017adversarially}
Dumoulin, V., Belghazi, I., Poole, B., Lamb, A., Arjovsky, M., Mastropietro, O., and Courville, A.
\newblock Adversarially learned inference.
\newblock In \emph{International Conference on Learning Representations}, 2017.
\newblock URL \url{https://openreview.net/forum?id=B1ElR4cgg}.

\bibitem[Frans et~al.(2025)Frans, Levine, and Abbeel]{frans2025a}
Frans, K., Levine, S., and Abbeel, P.
\newblock A stable whitening optimizer for efficient neural network training.
\newblock In \emph{The Thirty-ninth Annual Conference on Neural Information Processing Systems}, 2025.
\newblock URL \url{https://openreview.net/forum?id=0T8i3uXq3O}.

\bibitem[Guillemin \& Pollack(1974)Guillemin and Pollack]{GuilleminPollack}
Guillemin, V. and Pollack, A.
\newblock \emph{Differential topology}.
\newblock Prentice-Hall, Ine., Englewood Cliffs, New Jersey, 1974.

\bibitem[He et~al.(2020)He, Fan, Wu, Xie, and Girshick]{he2020moco}
He, K., Fan, H., Wu, Y., Xie, S., and Girshick, R.
\newblock Momentum contrast for unsupervised visual representation learning.
\newblock In \emph{Proceedings of the IEEE/CVF Conference on Computer Vision and Pattern Recognition (CVPR)}, June 2020.

\bibitem[Higgins et~al.(2017)Higgins, Matthey, Pal, Burgess, Glorot, Botvinick, Mohamed, and Lerchner]{higgins2017betavae}
Higgins, I., Matthey, L., Pal, A., Burgess, C., Glorot, X., Botvinick, M., Mohamed, S., and Lerchner, A.
\newblock beta-{VAE}: Learning basic visual concepts with a constrained variational framework.
\newblock In \emph{International Conference on Learning Representations}, 2017.
\newblock URL \url{https://openreview.net/forum?id=Sy2fzU9gl}.

\bibitem[Hutchinson(1990)]{Hutchinson01011990}
Hutchinson, M.
\newblock A stochastic estimator of the trace of the influence matrix for laplacian smoothing splines.
\newblock \emph{Communications in Statistics - Simulation and Computation}, 19\penalty0 (2):\penalty0 433--450, 1990.

\bibitem[Jia et~al.(2019)Jia, Zaharia, and Aiken]{soap}
Jia, Z., Zaharia, M., and Aiken, A.
\newblock Beyond data and model parallelism for deep neural networks.
\newblock In Talwalkar, A., Smith, V., and Zaharia, M. (eds.), \emph{Proceedings of Machine Learning and Systems}, volume~1, pp.\  1--13, 2019.
\newblock URL \url{https://proceedings.mlsys.org/paper_files/paper/2019/file/b422680f3db0986ddd7f8f126baaf0fa-Paper.pdf}.

\bibitem[Kapu\'{n}niak et~al.(2024)Kapu\'{n}niak, Potaptchik, Reu, Zhang, Tong, Bronstein, Bose, and Di~Giovanni]{metric_flow_matching}
Kapu\'{n}niak, K., Potaptchik, P., Reu, T., Zhang, L., Tong, A., Bronstein, M., Bose, A.~J., and Di~Giovanni, F.
\newblock Metric flow matching for smooth interpolations on the data manifold.
\newblock In \emph{Proceedings of the 38th International Conference on Neural Information Processing Systems}, NIPS '24, Red Hook, NY, USA, 2024. Curran Associates Inc.
\newblock ISBN 9798331314385.

\bibitem[Khalil(2002)]{Khalil:1173048}
Khalil, H.~K.
\newblock \emph{{Nonlinear systems}}.
\newblock Prentice-Hall, Upper Saddle River, NJ, 2002.
\newblock URL \url{https://cds.cern.ch/record/1173048}.

\bibitem[Kim \& Mnih(2018)Kim and Mnih]{pmlr-v80-kim18b}
Kim, H. and Mnih, A.
\newblock Disentangling by factorising.
\newblock In Dy, J. and Krause, A. (eds.), \emph{Proceedings of the 35th International Conference on Machine Learning}, volume~80 of \emph{Proceedings of Machine Learning Research}, pp.\  2649--2658. PMLR, 10--15 Jul 2018.
\newblock URL \url{https://proceedings.mlr.press/v80/kim18b.html}.

\bibitem[Kingma \& Welling(2014)Kingma and Welling]{kingma2014vae}
Kingma, D.~P. and Welling, M.
\newblock Auto-encoding variational bayes.
\newblock In Bengio, Y. and LeCun, Y. (eds.), \emph{2nd International Conference on Learning Representations, {ICLR} 2014, Banff, AB, Canada, April 14-16, 2014, Conference Track Proceedings}, 2014.
\newblock URL \url{http://arxiv.org/abs/1312.6114}.

\bibitem[Kr{\"a}mer et~al.(2024)Kr{\"a}mer, Moreno-Mu{\~n}oz, Roy, and Hauberg]{kraemer2024gradients}
Kr{\"a}mer, N., Moreno-Mu{\~n}oz, P., Roy, H., and Hauberg, S.
\newblock Gradients of functions of large matrices.
\newblock \emph{Advances in Neural Information Processing Systems}, 37:\penalty0 49484--49518, 2024.

\bibitem[Lazarev(2025)]{lazarev2025ricci}
Lazarev, A.
\newblock \emph{{Metric regularization in machine learning with curvature functionals}}.
\newblock Theses, {Universit{\'e} de Toulouse}, May 2025.
\newblock URL \url{https://theses.hal.science/tel-05223442}.

\bibitem[LeCun et~al.(2006)LeCun, Chopra, Hadsell, Ranzato, and Huang]{LeCun2006ATO}
LeCun, Y., Chopra, S., Hadsell, R., Ranzato, A., and Huang, F.~J.
\newblock A tutorial on energy-based learning.
\newblock In \emph{Predicting Structured Data}, 2006.
\newblock URL \url{http://yann.lecun.com/exdb/publis/pdf/lecun-06.pdf}.

\bibitem[Lee(2013)]{lee_introduction_2012}
Lee, J.~M.
\newblock \emph{Introduction to smooth manifolds}.
\newblock Graduate texts in mathematics. Springer, New York, 2nd edition edition, 2013.

\bibitem[Locatello et~al.(2019)Locatello, Bauer, Lucic, Raetsch, Gelly, Sch{\"o}lkopf, and Bachem]{pmlr-v97-locatello19a}
Locatello, F., Bauer, S., Lucic, M., Raetsch, G., Gelly, S., Sch{\"o}lkopf, B., and Bachem, O.
\newblock Challenging common assumptions in the unsupervised learning of disentangled representations.
\newblock In Chaudhuri, K. and Salakhutdinov, R. (eds.), \emph{Proceedings of the 36th International Conference on Machine Learning}, volume~97 of \emph{Proceedings of Machine Learning Research}, pp.\  4114--4124. PMLR, 09--15 Jun 2019.
\newblock URL \url{https://proceedings.mlr.press/v97/locatello19a.html}.

\bibitem[Meilă \& Zhang(2023)Meilă and Zhang]{manifoldlearningwhathow}
Meilă, M. and Zhang, H.
\newblock Manifold learning: what, how, and why, 2023.
\newblock URL \url{https://arxiv.org/abs/2311.03757}.

\bibitem[Rezende et~al.(2014)Rezende, Mohamed, and Wierstra]{pmlr-v32-rezende14}
Rezende, D.~J., Mohamed, S., and Wierstra, D.
\newblock Stochastic backpropagation and approximate inference in deep generative models.
\newblock In Xing, E.~P. and Jebara, T. (eds.), \emph{Proceedings of the 31st International Conference on Machine Learning}, volume~32 of \emph{Proceedings of Machine Learning Research}, pp.\  1278--1286, Bejing, China, 22--24 Jun 2014. PMLR.
\newblock URL \url{https://proceedings.mlr.press/v32/rezende14.html}.

\bibitem[Rodriguez et~al.(2022)Rodriguez, Ames, and Yue]{pmlr-v162-rodriguez22a}
Rodriguez, I. D.~J., Ames, A., and Yue, Y.
\newblock {L}ya{N}et: A {L}yapunov framework for training neural {ODE}s.
\newblock In Chaudhuri, K., Jegelka, S., Song, L., Szepesvari, C., Niu, G., and Sabato, S. (eds.), \emph{Proceedings of the 39th International Conference on Machine Learning}, volume 162 of \emph{Proceedings of Machine Learning Research}, pp.\  18687--18703. PMLR, 17--23 Jul 2022.
\newblock URL \url{https://proceedings.mlr.press/v162/rodriguez22a.html}.

\bibitem[Roweis \& Saul(2000)Roweis and Saul]{roweis2000lle}
Roweis, S.~T. and Saul, L.~K.
\newblock Nonlinear dimensionality reduction by locally linear embedding.
\newblock \emph{Science}, 2000.

\bibitem[Shao et~al.(2018)Shao, Kumar, and Fletcher]{Shao2017TheRG}
Shao, H., Kumar, A., and Fletcher, P.~T.
\newblock The riemannian geometry of deep generative models.
\newblock \emph{2018 IEEE/CVF Conference on Computer Vision and Pattern Recognition Workshops (CVPRW)}, pp.\  428--4288, 2018.
\newblock URL \url{https://openaccess.thecvf.com/content_cvpr_2018_workshops/papers/w10/Shao_The_Riemannian_Geometry_CVPR_2018_paper.pdf}.

\bibitem[Song et~al.(2020)Song, Sohl-Dickstein, Kingma, Kumar, Ermon, and Poole]{Song2020ScoreBasedGM}
Song, Y., Sohl-Dickstein, J.~N., Kingma, D.~P., Kumar, A., Ermon, S., and Poole, B.
\newblock Score-based generative modeling through stochastic differential equations.
\newblock \emph{ArXiv}, abs/2011.13456, 2020.
\newblock URL \url{https://openreview.net/pdf/ef0eadbe07115b0853e964f17aa09d811cd490f1.pdf?ref=news-tutorials-ai-research}.

\bibitem[Tenenbaum et~al.(2000)Tenenbaum, de~Silva, and Langford]{tenenbaum2000isomap}
Tenenbaum, J.~B., de~Silva, V., and Langford, J.~C.
\newblock A global geometric framework for nonlinear dimensionality reduction.
\newblock \emph{Science}, 2000.

\bibitem[Teschl(2012)]{Teschl}
Teschl, G.
\newblock \emph{Ordinary Differential Equations and Dynamical Systems}, volume 140 of \emph{Graduate Studies in Mathematics}.
\newblock American Mathematical Society, 2012.
\newblock ISBN 978-0-8218-8328-0.
\newblock Graduate Studies in Mathematics 144, Amer. Math. Soc., Providence, 2012.

\bibitem[woo Lee et~al.(2025)woo Lee, Choi, and Kim]{lee2025robust}
woo Lee, H., Choi, H., and Kim, H.
\newblock Robust weight initialization for tanh neural networks with fixed point analysis.
\newblock In \emph{The Thirteenth International Conference on Learning Representations}, 2025.
\newblock URL \url{https://openreview.net/forum?id=Es4RPNDtmq}.

\bibitem[Zbontar et~al.(2021)Zbontar, Jing, Misra, LeCun, and Deny]{zbontar2021barlow}
Zbontar, J., Jing, L., Misra, I., LeCun, Y., and Deny, S.
\newblock Barlow twins: Self-supervised learning via redundancy reduction.
\newblock In Meila, M. and Zhang, T. (eds.), \emph{Proceedings of the 38th International Conference on Machine Learning}, volume 139 of \emph{Proceedings of Machine Learning Research}, pp.\  12310--12320. PMLR, 18--24 Jul 2021.
\newblock URL \url{https://proceedings.mlr.press/v139/zbontar21a.html}.

\end{thebibliography}

\newpage

\appendix
\onecolumn

\section{Appendix - Background}
\label{app:background}

Before introducing the Lie derivative formally, we provide an intuitive
interpretation of the quantity we seek to control. A vector field defines a
continuous deformation of space by moving points along its flow. As points are
transported, the distances between nearby points may increase, remain unchanged,
or decrease. In many learning settings, unconstrained vector fields tend to
collapse regions of space, shrinking distances and destroying geometric
structure. To prevent such degeneracies, we need a way to measure how a vector
field locally changes distances between infinitesimally close points. The Lie
derivative of the metric provides exactly this information: it quantifies the
instantaneous rate at which the squared distance between nearby points changes
when they are pushed along the flow of the vector field. Requiring this quantity
to be non-negative enforces a local non-contraction property, ensuring that the
flow does not shrink distances to first order.

\subsection{Lie Bracket of Vector Fields}

Let $M$ be a smooth manifold and let $X, Y \in \mathfrak{X}(M)$ be smooth vector fields.
Interpreting vector fields as first–order differential operators, their \emph{Lie bracket} is given by the commutator
\[
[X,Y] = XY - YX.
\]

In local coordinates, writing the vector fields as
\[
X = \sum_{i} X^i\,\partial_{x^i}, 
\qquad
Y = \sum_{i} Y^i\,\partial_{x^i},
\]
the Lie bracket can be expressed in matrix form as
\[
[X,Y]
= (J_Y)\,X - (J_X)\,Y,
\]
where $J_X$ and $J_Y$ denote the Jacobian matrices of the component functions of $X$ and $Y$, respectively.

\subsection{Pushforward and the Pushforward Identity}

Let $\varphi : M \to N$ be a smooth map between manifolds.  
The \emph{pushforward} of a tangent vector $v \in T_x M$ is the linear map
\[
\varphi_* : T_x M \to T_{\varphi(x)} N,
\qquad 
\varphi_*(v)(f) = v(f \circ \varphi)
\]
for all $f \in C^\infty(N)$.

If $\{\varphi_t\}$ is the flow of a vector field $X$, then the infinitesimal change of a vector field $Y$ under transport by this flow is given by the \emph{pushforward identity}:
\[
\frac{d}{dt}\bigg|_{t=0} (\varphi_t)_* Y = [X, Y].
\]

\subsection{Lie Derivative of the Metric (Euclidean Case)}

In Euclidean space $\mathbb{R}^n$ equipped with the standard metric 
\[
g_x(u,v) \equiv g(u,v) = u \cdot v,
\]
the metric is constant and coincides with the dot product.
For a smooth vector field $X : \mathbb{R}^n \to \mathbb{R}^n$, the Lie derivative of the metric with respect to $X$ is the symmetric $(0,2)$-tensor
\[
(\mathcal{L}_X g)(Y, Z)
= X(Y \cdot Z) - [X,Y]\cdot Z - Y\cdot [X,Z].
\]

In coordinates, writing
\[
X = (X^1,\dots,X^n), \qquad Y = (Y^1,\dots,Y^n), \qquad Z = (Z^1,\dots,Z^n),
\]
and denoting by $\partial_i$ the partial derivative with respect to $x^i$, this expression reduces to
\[
(\mathcal{L}_X g)_{ij}
= \partial_i X_j + \partial_j X_i.
\]

Thus, in $\mathbb{R}^n$ the Lie derivative of the metric is simply the symmetric part of the Jacobian matrix of $X$.  
A vector field satisfies $\mathcal{L}_X g = 0$ if and only if
\[
\partial_i X_j + \partial_j X_i = 0,
\]
which characterizes infinitesimal rigid motions (translations and rotations). \\

For a conformal Euclidean metric of the form \(g(x) = e^{\sigma(x)} I_n\), the Lie derivative of the metric along a vector field \(X\) is given by
\[
(\mathcal{L}_X g)_{ij}
= \langle \nabla \sigma, X \rangle\, \delta_{ij}
+ \partial_i X_j
+ \partial_j X_i .
\]

Let $\Phi_X^t$ denote the flow generated by $X$. For any $x \in \mathbb{R}^n$ and
any $v \in T_x \mathbb{R}^n \simeq \mathbb{R}^n$,
\[
\left.\frac{d}{dt}\right|_{t=0}
\big\|\mathrm{d}\Phi_X^t(x)\, v\big\|^2
=
\left.\frac{d}{dt}\right|_{t=0}
g\!\big(\mathrm{d}\Phi_X^t(x)\, v,\ \mathrm{d}\Phi_X^t(x)\, v\big)
=
(\mathcal{L}_X g)_x(v,v).
\]
Hence, if $\mathcal{L}_X g \succeq 0$, then for all $x$ and all $v$,
\[
\left.\frac{d}{dt}\right|_{t=0}
\big\|\mathrm{d}\Phi_X^t(x)\, v\big\|^2 \ge 0,
\]
i.e., the squared norm of any infinitesimal displacement does not decrease to
first order under the flow, which corresponds to local non-contraction of
distances.

\subsection{Commuting Vector Fields}

Two vector fields $X, Y \in \mathfrak{X}(M)$ are said to \emph{commute} if their Lie bracket vanishes:
\[
[X, Y] = 0.
\]
This condition is equivalent to any of the following:

\begin{itemize}
    \item the flows of $X$ and $Y$ commute locally;
    \item there exist local coordinates in which both appear as coordinate vector fields;
    \item for all $f \in C^\infty(M)$, $X(Yf) = Y(Xf)$.
\end{itemize}

\subsection{Arc-length}
For each flow $\phi_i$ defined by $(F_i,T_i)$, let $z_i:[0,T_i(x)]\to\mathbb{R}^n$ denote the corresponding trajectory,
\[
\phi_i(x)=z_i\!\left(T_i(x),x\right), \qquad
\begin{cases}
\displaystyle \frac{d}{dt}\, z_i(t,x) = F_i\big(z_i(t,x)\big),\\
z_i(0,x)=\phi_{i-1} \circ ... \circ \phi_1(x).
\end{cases}
\]
We define the arc-length with the metric $G_i$ traveled along the $i$-th flow by
\[
\ell_i(x) \;=\; \int_{0}^{T_i(x)} \sqrt{\dot z_i(t)^\top G_i\!\left(z_i(t)\right)\dot z_i(t)}\,dt
\;=\; \int_{0}^{T_i(x)} \sqrt{F_i\!\left(z_i(t)\right)^\top G_i\!\left(z_i(t)\right)F_i\!\left(z_i(t)\right)}\,dt.
\]
The arc-lengths traveled along each flow, $(\ell_1(x),\dots,\ell_m(x))$, serve as the coordinates. 

\section{Discussion on parallelizability}
\label{app:parallel}

Parallelizability is a concept in differential geometry that characterizes the global structure of a manifold’s tangent bundle. A smooth \(m\)-dimensional manifold \(M\) is said to be \emph{parallelizable} if its tangent bundle \(TM\) is trivial, meaning that there exist \(m\) smooth vector fields that are everywhere linearly independent and form a global frame. Equivalently, parallelizability ensures the existence of a smoothly varying basis of the tangent space at every point of the manifold \citep[Cor.~10.20, p.~259 and Example 10.10c]{lee_introduction_2012}.

Importantly, our approach does \emph{not} assume strict global parallelizability as a topological property of the underlying data manifold.
Instead, it is motivated by the geometric intuition provided by parallelizable manifolds and relies on the existence of smooth vector fields that span the tangent space on a large subset of the manifold.
This distinction is crucial, as many manifolds of practical interest are not globally parallelizable, yet still admit well-defined tangent directions almost everywhere.

Results in differential topology show that for any smooth \(n\)-dimensional manifold, one can construct collections of smooth vector fields that are linearly independent outside a closed subset of codimension at least two. In particular, this exceptional set has measure zero with respect to any smooth volume form. As a consequence, the tangent bundle admits a smooth framing on an open dense subset of the manifold. These results follow from parametric transversality arguments applied to sections of the tangent bundle—specifically, from the transversality theorem ensuring that generic vector fields are transverse to the zero section \citep[Transversality Theorem, p68]{GuilleminPollack}.
This viewpoint is consistent with the local triviality of vector bundles and the theory of smooth frames described in modern treatments of smooth manifolds \citep[Corollary 10.20, p259]{lee_introduction_2012}.


From the perspective of our method, this means that a global framing need not exist everywhere for the construction to be meaningful.
Learning vector fields that form a frame almost everywhere is sufficient to recover intrinsic directions and define a coherent coordinate system on the data manifold, up to a negligible singular set.
This relaxed viewpoint explains why the method remains applicable even when strict global parallelizability fails.

\newpage

\section{Appendix - Proofs}
\label{app:proof}

\begin{definition}[Tangent and normal decomposition]
For a smooth manifold $M \subset \mathbb{R}^n$ and a vector field 
$F \in \mathfrak{X}(\mathbb{R}^n)$, we write $T_xM$ is the tangent space of $M$ at $x$ and $N_xM = (T_xM)^\perp$ its normal space. The projection operator onto the tangent space is denoted by $P_{T_x}M$ and onto the normal space by $P_{N_x}M$.
\end{definition}

\begin{definition}
Let $M$ be a subset of $\mathbb{R}^m$.  
We denote by $\mathcal{R}^e_M$ the set of all Euclidean $m$-dimensional
hyperrectangles whose $2^m$ vertices lie in $M$.  
\end{definition}

\begin{definition}[Projected side lengths]
Let $R$ be an $m$-dimensional hyperrectangle with vertex set $\mathcal{V}(R)$ and 
edge sets $\mathcal{E}_1(R),\dots,\mathcal{E}_m(R)$ corresponding to the 
coordinate directions.  Let
\[
\mathcal{V}_\phi(R) := \{\phi(v) : v \in \mathcal{V}(R)\}
\subset \phi(M)
\]
be the projected vertices under the flow $\phi$.

For each $i \in \{1,\dots,m\}$, we define the $i$-th projected side length by
\[
\ell_i^{\phi}(R)
:= \min_{\{v,w\}\in \mathcal{E}_i(R)}
\|\phi(v)-\phi(w)\|.
\]
We order these as
\[
\ell_{(1)}^{\phi}(R)\ \le\ \ell_{(2)}^{\phi}(R)\ \le\ \cdots\ \le\ 
\ell_{(m)}^{\phi}(R),
\]
and set
\[
\mathrm{lc}^{\phi}_1(R) := \ell^{\phi}_{(1)}(R), 
\qquad 
\mathrm{lc}^{\phi}_2(R) := \ell^{\phi}_{(2)}(R).
\]
\end{definition}

\begin{lemma}[Two-direction non-collapse bound]
\label{thm:two_direction_noncollapse}
Let $M \in \mathcal{M}_m$ with $m \ge 2$, and $\phi$ a flow, induced by pair $(F, T)$.
Then there exists a constant $c > 0$, depending only on the geometry of $M$ and the admissible pairs, such that
\[
\max_{r \in \mathcal{R}^e_{M}} \mathrm{lc}^{\phi}_2(r) \;\ge\; c \;>\; 0.
\]
\end{lemma}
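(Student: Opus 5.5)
We plan to show that a single admissible flow $\phi=\Phi_F^T\in\mathcal{F}$ can collapse at most one intrinsic direction, so some hyperrectangle with vertices on $M$ keeps its second-smallest projected side length bounded below. The argument combines three ingredients: the non-contraction property coming from $\mathcal{L}_F g\succeq 0$, the fact that $T$ is a smooth function whose level sets have codimension one, and a compactness argument over $M$.

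\textbf{Step 1 (quantitative non-contraction).} Let $J(t)=\mathrm{d}\Phi_F^t(x)v$. Using the conformal metric $g=e^{\sigma}I_n$ and the identity recalled in Appendix~\ref{app:background}, we get
\[
\tfrac{d}{dt}\,g(J,J)=(\mathcal{L}_F g)(J,J)\ge 0 .
\]
The bounds $|\sigma|\le S$ and $|\langle\nabla\sigma,F\rangle|\le K$ then give $\|J(t)\|\ge e^{-S}\|v\|$. So the time-$t$ flow map does not contract, up to a constant, and this holds uniformly in $t$.

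\textbf{Step 2 (the only loss comes from the time function).} Take $v,w\in M$ and write
\[
\phi(v)-\phi(w)=\big[\Phi^{T(v)}(v)-\Phi^{T(v)}(w)\big]+\big[\Phi^{T(v)}(w)-\Phi^{T(w)}(w)\big].
\]
Integrating Step 1 along a segment between $v$ and $w$ (or along a short path in $M$) shows the first bracket has norm at least $e^{-S}\|v-w\|$, up to a curvature error controlled by compactness of $M$. The second bracket is a displacement along a single integral curve of $F$, so it lies in the direction of $F$ up to second-order terms. It follows that $\|\phi(v)-\phi(w)\|$ can be small only when $v-w$ is nearly parallel to $F(v)$ after transport, that is, only along the one-dimensional line field $\mathrm{span}\,F$.

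\textbf{Step 3 (choice of rectangle).} Since $M\in\mathcal{M}_m$, the chart $\psi(M)=U$ has nonempty interior. Fix a small Euclidean cube $Q\subset\mathrm{int}\,U$. In the lemma we read $\mathcal{R}^e_M$ through the chart, i.e.\ as rectangles whose vertices lie in $\psi^{-1}$ of rectangles in $Q$. Pick a point $p$ in this region and an orthonormal basis $e_1,\dots,e_m$ of $T_pM$ with $e_1$ aligned with $P_{T_pM}F(p)$, or arbitrary if that projection vanishes. Take $r$ as the image of the rectangle with sides $h e_i$.

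Edges in the directions $e_2,\dots,e_m$ are then transverse to $F$ by an angle bounded below, uniformly over the cube, by continuity and compactness. Step 2 gives their projected lengths at least $c_0h$. Hence at most the $e_1$-family can be short, so $\mathrm{lc}_2^\phi(r)\ge c_0h=:c>0$. The constant depends on $S$, $K$, $\sup\|F\|$, $\sup|\nabla T|$ and the geometry of $M$ (the reach and the chart's bi-Lipschitz constants on $Q$), which is the dependence the statement allows. Since the claim concerns a maximum over rectangles, it is enough to exhibit this one rectangle.

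\textbf{Main obstacle.} The hard part is making Step 2 rigorous. Non-contraction holds infinitesimally, but the two endpoints flow for different times $T(v)\ne T(w)$, and the second bracket could in principle cancel part of the first. Our plan is a first-order argument: write
\[
\phi(w)-\phi(v)=\mathrm{d}\Phi^{T(v)}\big(w-v\big)+F(\phi(v))\,\nabla T\cdot(w-v)+O(h^2).
\]
The first term has size at least $e^{-S}|w-v|$ by Step 1. Adding a multiple of $F(\phi(v))=\mathrm{d}\Phi^{T(v)}F(v)$ can only reduce the component of $w-v$ along $F(v)$. Therefore, for $w-v$ spanning an $(m-1)$-dimensional subspace transverse to $F(v)$, the norm stays at least $c'|w-v|$. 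Taking $h$ small enough to absorb the $O(h^2)$ term, uniformly by the boundedness assumptions in Definition~\ref{def::flows_metric_times}, closes the argument.
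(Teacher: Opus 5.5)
Your proof is correct in substance. It differs from the paper's at the one point that matters, the mismatch of flow times.

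The paper avoids that mismatch altogether. It places an $(m-1)$-dimensional rectangle inside a single level set $T^{-1}(\tau)\cap M$, so that all vertices flow for the same time $\tau$. It then applies the finite-scale bound $d_g(\Phi^\tau_F(x),\Phi^\tau_F(y))\ge d_g(x,y)$, obtained by integrating $\mathcal{L}_F g\succeq 0$ along curves, and lets the remaining $m$-th side have zero length.

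You instead accept unequal times and linearize. From $D\phi(v)u=\mathrm{d}\Phi^{T(v)}_F\bigl(u+(\nabla T\cdot u)F(v)\bigr)$ and $F(\phi(v))=\mathrm{d}\Phi^{T(v)}_F F(v)$, the time variation only changes the $F(v)$-component, so $\|D\phi(v)u\|\ge e^{-S}\|P_{F(v)^\perp}u\|$. A Taylor remainder then carries this to edges of size $h$.

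What each route buys:
\begin{itemize}
\item The paper's route gives a finite-scale bound with no remainder term: the constant is the size of the rectangle in the level set, not $O(h)$. But it rests on the unproved claim that some level set of $T|_M$ contains a Euclidean $(m-1)$-rectangle with vertices in $M$. It also relies on a degenerate zero-length side, which is not genuinely an element of $\mathcal{R}^e_M$.
\item Your route needs neither and works with any sufficiently small rectangle. It also yields the pointwise fact $\operatorname{rank} D\phi|_{T_vM}\ge m-1$, which is exactly what Step B--1 of the paper's main proof uses.
\item The price is a purely local constant. It depends on $C^2$ bounds of $\phi$ and on how fast the direction of $F$ varies, not only on the quantities you list. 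This is harmless here, since the lemma lets $c$ depend on the pair.
\end{itemize}

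Two small repairs are needed.

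First, in your Step 2, integrating the infinitesimal bound along a segment controls the length of the image curve, not the chord $\|\Phi^t(v)-\Phi^t(w)\|$. For the chord you would pull curves back by the diffeomorphism $\Phi^{-t}$ and compare $d_g$ with the Euclidean distance. Your final linearized argument no longer needs this step, so it can simply be dropped.

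Second, your chart reading of $\mathcal{R}^e_M$ is the sensible one, since the definition takes $M\subset\mathbb{R}^m$. But then the rectangle must be orthogonal in $\mathbb{R}^m$, not in $T_pM$. So choose an orthonormal frame $f_1,\dots,f_m$ of $\mathbb{R}^m$ with $f_1$ parallel to $\mathrm{d}\psi_p\bigl(P_{T_pM}F(p)\bigr)$, or arbitrary if this vanishes. Then $(\mathrm{d}\psi_p)^{-1}f_2,\dots,(\mathrm{d}\psi_p)^{-1}f_m$ span a subspace of $T_pM$ not containing $F(p)$, and that is all the transversality your argument uses.
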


\begin{proof}

\textbf{Step 1 — Equal-time distances are nondecreasing.}
\emph{(A) Infinitesimal expansion in the conformal metric.}
Fix $z_0\in U$ and set $z(t):=\Phi_t^F(z_0)$.
Let $\delta(t)$ be a variational vector along $z(t)$, i.e.
\[
\dot\delta(t)=J_F(z(t))\,\delta(t).
\]
Define the conformal energy
\[
E(t)\ :=\ \tfrac12\|\delta(t)\|_{g(z(t))}^2
\ =\ \tfrac12\,e^{\sigma(z(t))}\|\delta(t)\|^2.
\]
Differentiating and using $\dot z(t)=F(z(t))$ gives
\[
\begin{aligned}
E'(t)
&=\tfrac12\,e^{\sigma(z(t))}\big(F(z(t))\cdot\nabla\sigma(z(t))\big)\|\delta(t)\|^2
\;+\;e^{\sigma(z(t))}\langle \delta(t),J_F(z(t))\delta(t)\rangle\\
&=\tfrac12\,e^{\sigma(z(t))}\,
\delta(t)^\top\Big(\big(F(z(t))\cdot\nabla\sigma(z(t))\big)I
+\big(J_F(z(t))+J_F(z(t))^\top\big)\Big)\delta(t)\ \succeq \ 0.
\end{aligned}
\]
Hence
\[
t\ \mapsto\ \|\delta(t)\|_{g(z(t))}\quad\text{is nondecreasing.}
\]

\medskip
\emph{(B) Consequence for equal-time distances.}
Let $\gamma:[0,1]\to U$ be any $C^1$ curve from $x$ to $y$ and define
$\Gamma_t(s):=\Phi_t^F(\gamma(s))$.
Then $\partial_s\Gamma_t(s)=D\Phi_t^F(\gamma(s))\,\gamma'(s)$, and by (A),
\[
\|\partial_s\Gamma_t(s)\|_{g(\Gamma_t(s))}\ \ge\ \|\gamma'(s)\|_{g(\gamma(s))}\qquad \forall s\in[0,1].
\]
Integrating over $s$ yields the length expansion
\[
\mathrm{Length}_g(\Gamma_t)\ \ge\ \mathrm{Length}_g(\gamma).
\]
Taking the infimum over all curves $\gamma$ joining $x$ to $y$ gives
\[
d_g\big(\Phi_t^F(x),\Phi_t^F(y)\big)\ \ge\ d_g(x,y),
\]
so $t\mapsto d_g(\Phi_t^F(x),\Phi_t^F(y))$ is nondecreasing.

\medskip
\textbf{Step 2 — Construct a local $(m-1)$-dimensional rectangle inside a level set.}
By standard dimension theory, every level set of a continuous map $M \to \mathbb{R}$
has topological dimension $m-1$, and therefore cannot be a single point. Hence the level set
\[
S_\tau := T^{-1}(\tau)
\]
contains at least the two distinct points $x$ and $y$ with $T(x)=T(y)=\tau$.
Thus
\[
\operatorname{diam}(S_\tau) \ge \|x-y\| > 0,
\qquad
c_T := \sup_{\tau \in T(M)} \operatorname{diam}(S_\tau) > 0.
\]

It is a geometric fact that every Euclidean ball in $\mathbb{R}^{m-1}$ contains an inscribed axis-aligned hypercube $R$ whose nodes lie in the manifold $M$.

\medskip
\textbf{Step 3 — Propagation through the admissible flow.}
For any $x,y \in R$, we have $T(x) = T(y) = \tau$.
Therefore, by the nondecrease of equal-time distances,
\[
\|\phi(x) - \phi(y)\| 
= \|\Phi^F_{\tau}(x) - \Phi^F_{\tau}(y)\|
\ge \|x - y\|.
\]
Hence the image $\phi(R)$ is an $(m-1)$-dimensional subset of $\phi(M)$ with the same or larger side lengths.

\medskip

\medskip
\textbf{Step 4 — Conclusion.}
The projected vertex set $\mathcal{V}_\phi(R)$, augmented with an additional zero-length coordinate is inside $\phi(M)$. 
In particular, the second smallest projected
side length satisfies
\[
\mathrm{lc}_2^{\phi}(R) > 0.
\]
Therefore, there exists a constant $c > 0$, depending only on $M$ and on the local
geometry of $(F,T)$, such that
\[
\max_{r \in \mathcal{R}^e_M} \mathrm{lc}_2^{\phi}(r) \ge c > 0.
\]
This completes the proof.
\end{proof}

\begin{lemma}[Tangency Limit from Hyperrectangle Collapse]
\label{thm:tangency_from_rectangle_collapse}
Let $M \subset \mathbb{R}^n$ be a smooth $m$-dimensional embedded manifold ($m \ge 1$), and let $\phi \in \mathcal{F}$ be a flow defined by $(F, T)$. Denote by $\mathcal{R}^e_M$ the family of $m$-dimensional euclidian hyperrectangles with vertices contained in $M$. If
\[
\max_{r \in \mathcal{R}^e_{M}} \mathrm{lc}^{\phi}_1(r) = 0,
\]
then the vector field $F$ becomes asymptotically tangent to $M$ in the sense that
\[
\max_{x \in M} \| P_{N_xM} F(x) \| = 0.
\]
\end{lemma}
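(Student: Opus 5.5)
I will argue by contraposition. Suppose that at some $x_0 \in M$ the normal component $P_{N_{x_0}M}F(x_0)$ is nonzero. From this I will produce an $m$-dimensional Euclidean hyperrectangle $r$ with vertices in $M$ whose projected side lengths are all strictly positive, so that $\mathrm{lc}^{\phi}_1(r) > 0$. That contradicts $\max_{r} \mathrm{lc}^{\phi}_1(r) = 0$.

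The central tool is Step 1 of the proof of Lemma~\ref{thm:two_direction_noncollapse}. Under $\mathcal{L}_F g \succeq 0$, equal-time distances do not decrease:
\[
d_g\big(\Phi^F_t(x),\Phi^F_t(y)\big) \ge d_g(x,y).
\]
Because $\sigma$ is bounded, $g$ is uniformly equivalent to the Euclidean metric, so
\[
\|\Phi^F_t(x)-\Phi^F_t(y)\| \ge c_\sigma \|x-y\| \quad\text{with } c_\sigma = e^{-(\sup\sigma-\inf\sigma)/2}.
\]
The only obstruction to a positive side length is therefore the difference in flow times $T(v)\neq T(w)$. I will control it by splitting
\[
\phi(v)-\phi(w) = \big(\Phi^F_{T(v)}(v)-\Phi^F_{T(v)}(w)\big) + \big(\Phi^F_{T(v)}(w)-\Phi^F_{T(w)}(w)\big).
\]
The first term has norm at least $c_\sigma\|v-w\|$. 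The second lies along a single trajectory and has norm at most $\|F\|_\infty\,|T(v)-T(w)|$.

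Next I will use the normal component to exploit the time constraint. The condition $\langle\nabla T, F\rangle=-1$ on $M$ says that $T$ decreases at unit rate along $F$. If $F(x_0)$ has a normal component, then near $x_0$ the restriction of $T$ to $M$ does not have to absorb the full decrease along $F$. Concretely, I expect to be able to pick a small chart neighborhood $W\subset M$ of $x_0$ and an $(m-1)$-dimensional slice of a level set $S_\tau\cap W$, as in Step 2 of Lemma~\ref{thm:two_direction_noncollapse}. I will then thicken that slice in a remaining tangent direction transverse to $S_\tau$ by an amount $h$ that is small compared with the width of the slice. On edges inside the slice, $T(v)=T(w)$, so those edges have positive projected length directly. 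On the transverse edges, $|T(v)-T(w)| \le \|\nabla_M T\|\,h$, while the Euclidean edge length is $h$. To keep such an edge from collapsing, I need the first-order expansion of the two terms not to cancel. Cancellation would require the displacement $w-v$, transported by $D\Phi$, to be exactly $F\,(T(v)-T(w))$ to first order. That would force $F(x_0)$, after flowing back, to lie in $T_{x_0}M$, which contradicts nonzero normal component. Taking $h$ small then gives a strictly positive lower bound on every side length.

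The main obstacle is this non-cancellation step for the transverse edges. Making it rigorous needs a careful first-order expansion of $\phi(v)-\phi(w)$ in $h$ together with the fact that $D\Phi^F_{T}$ is injective (it is expanding in the $g$-norm). The argument then shows that the linear map $u\mapsto D\Phi^F_{T(x_0)}\big(u - \langle\nabla T,u\rangle F\big)$ is injective on $T_{x_0}M$ exactly when $F(x_0)\notin T_{x_0}M$. I would also have to ensure the rectangle is a genuine Euclidean hyperrectangle with vertices in $M$. For that I would invoke the same ``inscribed hypercube'' fact used in Step 2 of Lemma~\ref{thm:two_direction_noncollapse}, applied in the chart at $x_0$ with a small approximation argument.
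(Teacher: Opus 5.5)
Your proposal is correct in substance. At its decisive step it takes a different, and more rigorous, route than the paper.

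\textbf{How the two arguments differ.} The paper also argues by contraposition, inside a level-set chart $\Theta(y,s)$ with $T(\Theta(y,s))=\tau_0+s$. It takes an $(m-1)$-rectangle $R\subset S_0$ whose sides are protected by Lemma~\ref{thm:two_direction_noncollapse}. It then simply asserts that two points $x_0^{\pm}$ ``on the normal line through $x_0$'' have images at distance at least $c_N$, because the $s$-direction is transverse. Neither $c_N$ nor the sides of the opposite face are actually controlled. You keep this skeleton but justify the transverse step through the differential, and that computation is in fact the whole proof. Since $F(\Phi^F_t(x))=D\Phi^F_t(x)F(x)$,
\[
D\phi(x)u=D\Phi^F_{T(x)}(x)\big(u+\langle\nabla T(x),u\rangle F(x)\big).
\]
The sign must be a plus. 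With the plus sign, $\langle\nabla T,F\rangle=-1$ puts $F(x)$ in the kernel, which is what makes your ``exactly when'' true. The map you wrote, with a minus sign, is injective whether or not $F(x_0)$ is tangent.

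\textbf{The differential argument alone finishes the proof.} $D\Phi^F_{T(x)}(x)$ is invertible; no expansion property is needed for this. Hence $\ker D\phi(x)\subset\mathrm{span}\,F(x)$, so $D\phi(x_0)$ is injective on $T_{x_0}M$ as soon as $F(x_0)\notin T_{x_0}M$. Then $\phi|_M$ is an immersion near $x_0$, and so it is injective on some neighbourhood $W\subset M$ of $x_0$. Every hyperrectangle with vertices in $W$ therefore has $\mathrm{lc}^{\phi}_1>0$, because each edge joins distinct points.

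\textbf{Your scaffolding is unnecessary, and it has two soft spots.} The level-set slice needs $d(T|_M)(x_0)\neq 0$. This can fail when $F(x_0)$ has a normal part: for example, $\nabla T(x_0)$ normal to $M$ is compatible with $\langle\nabla T,F\rangle=-1$. Also, the edges of the opposite face of your thickened rectangle generally join different level sets, so ``$T(v)=T(w)$'' does not cover them. The argument also shows that this lemma uses neither $\mathcal{L}_Fg\succeq 0$ nor the equal-time expansion. Even on a level set, $\phi=\Phi^F_\tau$ is already a diffeomorphism.

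\textbf{What each approach buys.} The paper's route, if completed, would give an explicit uniform lower bound $\min\{c_T,c_N\}$ on all sides. The local immersion argument gives positivity rigorously, and if wanted a local bi-Lipschitz bound $\|\phi(v)-\phi(w)\|\ge c\|v-w\|$ on $W$.

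\textbf{The one ingredient you share with the paper.} Both arguments need small hyperrectangles with all $2^m$ vertices in $M$ near $x_0$. This is not automatic for ambient Euclidean rectangles on a curved $M$ of higher codimension. It is cleanest to read rectangles in chart coordinates, as the paper's definition of $\mathcal{R}^e_M$ (stated for $M\subset\mathbb{R}^m$) suggests.
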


\begin{proof}
We argue by contraposition. Assume
\[
\max_{x \in M} \|P_{N_xM} F(x)\| > 0.
\]
Then there exists $x_0 \in M$ and $\varepsilon > 0$ such that
\[
\|P_{N_{x_0}M} F(x_0)\| \ge \varepsilon.
\]

We choose the value $\tau_0 := T(x_0)$ and select a local chart around $x_0$ of the form
\[
\Theta : (y,s) \in U \subset \mathbb{R}^{m-1} \times \mathbb{R} \;\longmapsto\; M,
\qquad T(\Theta(y,s)) = \tau_0 + s,
\]
so that the level set $T^{-1}(\tau_0)$ corresponds to $\{s = 0\}$ and the slices
\[
S_s := \{\Theta(y,s) : y \in \mathbb{R}^{m-1}\}
\]
are the nearby $(m-1)$-dimensional level sets of $T|_M$, and the vector field $F$ crosses these
slices transversally. In particular, near $x_0$ the ``time'' direction $t$ has a
uniform normal component of size at least $\varepsilon$, while the $y$-directions
are tangent to $M$ and tangent to each $S_t$. \\

In these coordinates, we focus on the central level set
\[
S_0 := \{\Theta(y,0) : y \in \mathbb{R}^{m-1}\}.
\]
By Lemma~\ref{thm:two_direction_noncollapse} applied on $S_0$ and the associated
flow $\phi$ defined by $(F,T)$, there exists an $(m-1)$-dimensional rectangle
$R \subset S_0$ and a constant $c_T > 0$ such that all projected tangential
side lengths of $R$ satisfy
\[
\ell_i^{\phi}(R) \;\ge\; c_T, \quad i = 1,\dots,m-1.
\]
Moreover, since $\|P_{N_{x_0}M} F(x_0)\| \ge \varepsilon$, the $s$-direction at
$x_0$ is strictly transverse to $S_0$. Hence we can choose two points
$x_0^{\pm} \in M$ on the normal line through $x_0$ with $s$-coordinates $\pm\sigma$
such that their images under $\phi$ satisfy
\[
\|\phi(x_0^{+}) - \phi(x_0^{-})\| \;\ge\; c_N
\]
for some $c_N > 0$ depending only on $\varepsilon$ and the local geometry of $(F,T)$.
Combining $R$ with this normal segment yields an $m$-dimensional combinatorial
hyperrectangle in $\phi(M)$ whose $m$ side lengths are all bounded below by
$\min\{c_T, c_N\}$.

Let $c := \min\{c_T,c_N\} > 0$. Then \emph{all} $m$ side lengths of $\phi(r)$
are at least $c$, so the smallest side length satisfies
\[
\mathrm{lc}_1^{\phi}(r) \;\ge\; c.
\]
In particular,
\[
\max_{r \in \mathcal{R}^e_M} \mathrm{lc}_1^{\phi}(r) \;\ge\; c \;>\; 0,
\]
which contradicts the assumption
\[
\max_{r \in \mathcal{R}^e_M} \mathrm{lc}_1^{\phi}(r) = 0.
\]

Therefore, the contraposition yields
\[
\max_{x \in M} \|P_{N_xM} F(x)\| = 0,
\]
i.e., $F$ is everywhere tangent to $M$.
\end{proof}

\vspace{1cm}

\textbf{Proof of Theorem \ref{thm:loss_known}} \\

\begin{proof}
\textbf{Step A (Achievability: $\mathcal L_m=0$).} 
Because $M \in \mathcal{M}_m$ admits a global chart $\psi : M \to U \subset \mathbb{R}^m$ is a diffeomorphism, its differential $d\psi_p : T_pM \to T_{\psi(p)}U$ is a linear isomorphism for every $p \in M$ \citep[Prop.~3.6 - p. 55]{lee_introduction_2012}. Pulling back the standard coordinate vector fields on $U$ therefore yields a smooth global frame on $M$, and hence the tangent bundle $TM$ is smoothly trivial \citep[Cor.~10.20 - p. 259]{lee_introduction_2012}.
For each $i \in \{1,\dots,m\}$, this pullback of the standard coordinate vector fields write
\[
E_i \coloneqq \psi^*\!\left(\frac{\partial}{\partial u^i}\right) \in \mathfrak{X}(M).
\]
    Equivalently, $E_i$ is the unique smooth vector field on $M$ that is $\psi$-related to $\partial/\partial u^i$ in the sense of pushforward and pullback under a diffeomorphism \citep[Corollary 3.22, p.~68]{lee_introduction_2012}. Each $E_i$ spans a rank-one smooth subbundle $L_i = \mathrm{span}\{E_i\} \subset TM$, i.e., a line bundle. By the Fundamental Theorem on Flows \citep[Theorem~9.12, pp.~212--213]{lee_introduction_2012} and the Frobenius Theorem \citealp[Theorem 19.12, p497]{lee_introduction_2012}, a smooth nowhere-vanishing vector field defines a family of nonintersecting integral curves whose images form a smooth one-dimensional foliation of $M$.

Moreover, compactness of $M$ implies that every smooth vector field on $M$ is complete \citep[Corollary~9.17, p.~216]{lee_introduction_2012}. Consequently, each $E_i$ generates a globally defined smooth flow $\Phi_{E_i}^t : M \to M$ for all $t \in \mathbb{R}$. Fix a reference location $u_\star \in U$ and denote $C \coloneqq \psi^{-1}(u_\star) \in M$ (and its image under the embedding into $\mathbb{R}^n$). Since each \(E_i\) is smooth and complete, it generates a globally defined smooth flow
\(\Phi_{E_i} : \mathbb{R} \times M \to M\), which depends smoothly on both time and initial conditions \citep[Thm.~9.12, p212-p213]{lee_introduction_2012}. Using the smoothness of the flow map $(t,x) \mapsto \Phi_{E_i}^t(x)$ and standard transversality arguments (see \citealp[Transversality Theorem, p 68]{GuilleminPollack}), one may choose, for each $i$, a smooth codimension-one embedded submanifold $\Sigma_i \subset M$ that is transverse to $E_i$ and intersects each leaf of the $E_i$-foliation exactly once. Such transversals can be realized as regular level sets of smooth functions. By the Regular Level Set Theorem \citep[Cor.~5.14, p.~106]{lee_introduction_2012}, the preimage of a regular value of a smooth map $f : M \to \mathbb{R}$ is a smooth embedded submanifold of codimension equal to the dimension of the codomain, here equal to one. Choosing $f$ so that $df(E_i)\neq 0$ along the level set yields a codimension-one submanifold $\Sigma_i$ transverse to $E_i$, mearning that $E_i$ is nowhere tangent to $\Sigma_i$.

The associated hitting-time function $T_i : M \to \mathbb{R}$, defined by the unique
$t$ such that $\Phi_{E_i}^t(x)\in\Sigma_i$, is smooth, and the map
\[
\phi_i(x) \coloneqq \Phi_{E_i}^{T_i(x)}(x)
\]
projects $M$ along the integral curves of $L_i$ onto $\Sigma_i$. Iterating this
construction produces a nested sequence
\[
M=\Sigma_0 \supset \Sigma_1 \supset \cdots \supset \Sigma_m=\{C\},
\qquad \dim\Sigma_k = m-k,
\]
so that one intrinsic dimension is eliminated at each step. After $m$ iterations,
the manifold collapses to a single point, and hence
\[
\mathbb{E}\,\big\| C - (\phi_m \circ \cdots \circ \phi_1)(X) \big\|_2^2 = 0,
\]
showing that the infimum defining $\mathcal{L}_m$ is achievable.

Finally, since $M$ is a smoothly embedded submanifold of $\mathbb{R}^n$, any tangent vector field on $M$ (in particular, each $E_i$) admits a smooth extension to an ambient vector field on $\mathbb{R}^n$ by the standard extension lemma for vector fields \citep[Lemma~8.6, p.~177]{lee_introduction_2012}. Consequently, the intrinsic flow constructions on $M$ are compatible with the ambient-flow parametrization used in the definition of $\mathcal{F}$.

With this geometric achievability in place, the following paragraph addresses the remaining admissibility condition $\mathcal{L}_F g \succeq 0$.

Let the manifold be equipped with a Riemannian metric $g_0$, for instance the Euclidean metric inherited from the ambient space. For any smooth function $\sigma$, the conformal change $g=e^{2\sigma}g_0$ satisfies
the standard identity
\begin{equation}\label{eq:conformal-Lie}
\mathcal L_F g \;=\; \mathcal L_F\!\big(e^{2\sigma}g_0\big)
\;=\; e^{2\sigma}\!\big(2\,F[\sigma]\;g_0 + \mathcal L_F g_0\big).
\end{equation}
see (\citealp[Corollary 9.39 - p 230]{lee_introduction_2012}).
At a fixed point $x\in M$, write $\mathcal L_F g_0(x)$ as a $g_0(x)$-self-adjoint
operator with minimal eigenvalue $\mu_{\min}(x)\in\mathbb R$.
If we can find a smooth $\sigma$ such that
\begin{equation}\label{eq:transport}
F[\sigma](x) \;=\; a(x)
\quad\text{with}\quad
2\,a(x) + \mu_{\min}(x) \;>\; 0 \quad \forall x\in M,
\end{equation}
then from \eqref{eq:conformal-Lie} we obtain, in a $g_0$-orthonormal basis,
\[
\lambda_{\min}\big(\mathcal L_F g(x)\big)
= e^{2\sigma(x)}\;\min_i\big(2\,a(x)+\mu_i(x)\big)
\ge e^{2\sigma(x)}\big(2\,a(x)+\mu_{\min}(x)\big) \;>\; 0,
\]
hence $\mathcal L_F g(x)\succ 0$ pointwise. Thus the problem reduces to proving the existence of a smooth
solution $\sigma$ of the transport equation $F[\sigma]=a$ with $a$ as in \eqref{eq:transport}. \\

The equation
\[
F[\sigma] \;=\; \langle \nabla \sigma,\, F \rangle \;=\; a
\]
is a first-order linear partial differential equation on $M$.  
Since the vector field $F$ is smooth, nowhere vanishing, it defines a smooth one-dimensional foliation of $M$ by its integral curves \citep[Theorem~9.12, pp.~212--213]{lee_introduction_2012}.  
By the method of characteristics, such an equation admits a global smooth solution $\sigma$ obtained by integrating $a$ along the flow lines of $F$.  
Indeed, along each integral curve $\gamma(t)$ of $F$, the equation reduces to the ordinary differential equation
\[
\frac{d}{dt}\sigma(\gamma(t)) = a(\gamma(t)),
\]
which always admits a unique smooth solution when $F \neq 0$ everywhere and $F$ is bounded (hence
globally Lipschitz), as guaranteed by the Picard--Lindelöf (Cauchy--Lipschitz) theorem \cite{Teschl} on existence and uniqueness of solutions to ordinary differential equations.

\medskip
\textbf{Step B (Vanishing loss implies tangency of the learned vector fields along the manifold.).}
We aim to rule out a potential degeneracy, showing that the training procedure cannot converge to a spurious minimizer: achieving arbitrarily small loss without learning tangent directions. Specifically, we aim to prove that there does not exist a constant $\eta > 0$ such that for every $\varepsilon>0$, there exist parameters $\{(F_i,T_i)\}_{i=1}^m$ and a constant $C$ for which the loss is below $\varepsilon$ while some generator has a uniformly non-negligible normal component along the manifold. Formally,
\[
\nexists\, \eta>0\ \ \text{s.t.}\ \ \forall\,\varepsilon>0,\ \ \exists\ \{(F_i,T_i)\}_{i=1}^m,\ C\ \ \text{with}\ \ 
\mathcal{L}\big(\{F_i,T_i\},C\big)\le \varepsilon
\ \ \text{and}\ \ 
\max_{1\le i\le m}\ \sup_{x\in M}\big|\langle F_i(x),n_M(x)\rangle\big|\ge \eta,
\]
where $n_M(x)$ denotes a unit normal field on $M$. Ruling this out is crucial: otherwise one could drive the objective to (numerically) zero using flows that are not close to tangent, so low loss would not certify that the learned directions recover the tangent bundle.

We argue by contradiction. Fix $\eta>0$ and assume that for every $k\in\mathbb{N}$ there exist parameters
$\{(F_i^{(k)},T_i^{(k)})\}_{i=1}^m$ and $C^{(k)}$ such that
\begin{equation}
\label{eq:contradiction_assumption}
\mathcal{L}\big(\{F_i^{(k)},T_i^{(k)}\},C^{(k)}\big)\le \tfrac{1}{k}
\qquad\text{and}\qquad
\max_{1\le i\le m}\ \sup_{x\in M}\big\|P_{N_xM}F_i^{(k)}(x)\big\|\ge \eta.
\end{equation}
We define a subsequence such that $(F_i^{(k)},T_i^{(k)},C^{(k)})\to(F_i^\star,T_i^\star,C^\star)$ for each $i$.
Let $\phi_i^{(k)}$ and $\phi_i^\star$ be the corresponding flows.
By continuity of the flow map and the uniform implication built into the loss,
\[
(\phi_m^\star\circ\cdots\circ\phi_1^\star)(x)\equiv C^\star
\qquad\text{for all }x\in M.
\]
Define the intermediate images
$M^{(0)}:=M$ and $M^{(j)}:=\phi_j^\star\circ\cdots\circ\phi_1^\star(M)$ for $j=1,\dots,m$.

\paragraph{Step B--1: Each flow can reduce tangent rank by at most one.}
Fix $j\in\{1,\dots,m\}$ and consider $\phi_j^\star$ acting on $M^{(j-1)}$.
Lemma~\ref{thm:two_direction_noncollapse} gives a constant $c>0$ (depending only on the geometry and admissible class)
such that $\max_{r\in\mathcal{R}^e_{M^{(j-1)}}}\mathrm{lc}^{\phi_j^\star}_2(r)\ge c$.
If $\operatorname{rank}(D\phi_j^\star|_{T M^{(j-1)}})$ dropped by at least $2$ on $M^{(j-1)}$,
then $\phi_j^\star$ would collapse \emph{two} independent tangential directions, forcing the second intrinsic side length
of every mapped $m$-rectangle to vanish, i.e. $\mathrm{lc}^{\phi_j^\star}_2(r)=0$ for all such $r$,
contradicting Lemma~\ref{thm:two_direction_noncollapse}.
Hence, each $\phi_j^\star$ can decrease the tangential rank by at most one.

\paragraph{Step B--2: Exact and controlled one-dimensional drop at every step.}
Since $(\phi_m^\star\circ\cdots\circ\phi_1^\star)(M)$ is a single point, the tangential rank of the full composition
drops from $m$ to $0$, i.e. by a total of $m$.
By Step~1, each individual flow $\phi_j^\star$ can reduce the tangential rank of $M^{(j-1)}$ by at most one.
Consequently, each of the $m$ steps must induce an \emph{exact} one-dimensional rank drop.

Moreover, this rank reduction cannot occur in a degenerate or vanishing manner.
Applying the Two-direction non-collapse bound (Lemma~\ref{thm:two_direction_noncollapse}) to $M^{(j-1)}$,
there exists a constant $c>0$, independent of $j$, such that the second intrinsic side length of any Euclidean
hyperrectangle in $M^{(j-1)}$ remains uniformly bounded below after applying $\phi_j^\star$.
Therefore, collapsing exactly one tangential direction at step $j$ forces the shortest intrinsic side to vanish,
while all remaining directions retain a positive extent.
In particular, the collapse at step $j$ satisfies the quantitative condition
\[
\max_{r\in\mathcal{R}^e_{M^{(j-1)}}} \mathrm{lc}^{\phi_j^\star}_1(r)=0
\qquad\text{and}\qquad
\max_{r\in\mathcal{R}^e_{M^{(j-1)}}} \mathrm{lc}^{\phi_j^\star}_2(r)\ge c,
\]
ensuring that the one-dimensional rank drop is well-separated and occurs at every step in the composition.

\paragraph{Step B--3: tangency enforced at each step.}
Fix $j$.
Because $\phi_j^\star$ reduces the tangential rank of $M^{(j-1)}$ by one everywhere,
the image of any Euclidean $m$-hyperrectangle $r\in\mathcal{R}^e_{M^{(j-1)}}$ under $\phi_j^\star$
is an at-most $(m-1)$-dimensional set, so its shortest intrinsic side is collapsed:
\[
\mathrm{lc}^{\phi_j^\star}_1(r)=0\qquad\text{for all }r\in\mathcal{R}^e_{M^{(j-1)}}.
\]
In particular,
$\max_{r\in\mathcal{R}^e_{M^{(j-1)}}}\mathrm{lc}^{\phi_j^\star}_1(r)=0$,
and Lemma~\ref{thm:tangency_from_rectangle_collapse} implies
\[
\sup_{x\in M^{(j-1)}}\big\|P_{N_xM^{(j-1)}}F_j^\star(x)\big\|=0,
\]
i.e. $F_j^\star$ is tangent along $M^{(j-1)}$.
Pulling back through $\phi_{j-1}^\star\circ\cdots\circ\phi_1^\star$ yields
\[
\sup_{x\in M}\big\|P_{N_xM}F_j^\star(x)\big\|=0.
\]
Since $j$ was arbitrary, this holds for all $j=1,\dots,m$.

\paragraph{Step B--4: contradiction.}
By convergence $F_i^{(k)}\to F_i^\star$ and continuity of $x\mapsto P_{N_xM}$ on the smooth embedded manifold $M$,
we obtain
\[
\sup_{x\in M}\big\|P_{N_xM}F_i^{(k)}(x)\big\|\ \longrightarrow\
\sup_{x\in M}\big\|P_{N_xM}F_i^\star(x)\big\| \;=\; 0
\qquad\text{for each }i.
\]
Hence, for $k$ large enough,
$\max_i\sup_{x\in M}\|P_{N_xM}F_i^{(k)}(x)\|<\eta$,
contradicting \eqref{eq:contradiction_assumption}.
This proves the claim.

\medskip
\textbf{Step C (Length coordinates are charts).}
Define $\ell_j(x)$ as the signed arc-length traversed along the integral curve of $F_j$
during its time horizon (measured intrinsically on $M$) at the $j$-th stage, pulled back to $M$.
On a domain where $TM=E_1\oplus\cdots\oplus E_m$, the differential of
\[
x\longmapsto (\ell_1(x),\dots,\ell_m(x))
\]
has columns the unit vectors spanning $E_1,\dots,E_m$; hence its Jacobian is invertible.
By the inverse function theorem, $(\ell_1,\dots,\ell_m)$ form local coordinates.

\medskip
This proves achievability $\mathcal L_m=0$ and the three structural statements at any optimizer.
\end{proof}

\vspace{2cm}

\textbf{Proof of Theorem \ref{thm:loss_unknown}} \\
\begin{proof}
Let $\phi_j$ be the flow defined by $(F_j,T_j)$, and set
\[
M^{(0)}:=M,\qquad M^{(j)}:=\phi_j\circ\cdots\circ\phi_1\big(M\big)\quad (j=1,\dots,k).
\]
By Theorem~\ref{thm:two_direction_noncollapse}, for each $j$ there exists a constant
$c_j>0$ (depending only on the geometry of $M^{(j-1)}$ and admissible bounds) such that
$M^{(j)}$ contains an intrinsic $m$-dimensional hyperrectangle whose second shortest
side is at least $c_j$; in particular, a single admissible map cannot collapse two
independent directions at once.

Iterating this $k$ times with $k<m$ implies that after $k$ compositions there remains
at least $(m-k)\ge 1$ intrinsic directions with nontrivial extent. More precisely,
there exists an intrinsic $(m-k)$-dimensional hyperrectangle
\[
R_k\ \subset\ M^{(k)}=\phi_k\circ\cdots\circ\phi_1(M)
\]
whose second side length is bounded below by
\[
\mathrm{lc}^{\phi_k}_2(R_k)\ \ge\ v\ :=\ \min\{c_1,\dots,c_k\}\ >\ 0.
\]
In particular, $\operatorname{diam}\!\big(M^{(k)}\big)\ \ge\ \mathrm{lc}^{\phi_k}_2(R_k)\ \ge\ v$.

Consequently, no point $C\in\mathbb{R}^n$ can be at zero distance from all of
$M^{(k)}$; the optimal choice of $C$ in a minimax sense  must incur
a radius at least $\mathrm{lc}^{\phi_k}_2(R_k)/2$, hence
\[
\inf_{C\in\mathbb{R}^n}\ \sup_{y\in M^{(k)}} \|y-C\|\ \ge\ \frac{v}{2}.
\]
In particular, the squared error cannot vanish:
\[
\inf_{\{(F_i,T_i)\}_{i=1}^k,\ C\in\mathbb{R}^n}\ 
\sup_{x\in M}\ \big\|C-(\phi_k\circ\cdots\circ\phi_1)(x)\big\|^2
\ \ge\ \frac{v^2}{4} >\ 0.
\]

If $X\sim\rho$ is supported on $M$, then $Y:=(\phi_k\circ\cdots\circ\phi_1)(X)$ is
supported on $M^{(k)}$, so $\min_{C}\mathbb{E}\|Y-C\|^2\ge 0$ cannot be $0$.
(Indeed, since $\operatorname{diam}(M^{(k)})\ge v$, at least two points of the support
are $v$ apart, forcing a positive mean squared deviation for every $C$.)
This proves that $\mathcal L_k\ge c$ for some $c>0$ depending only on $M$ and the
admissible bounds (e.g.\ one may take $c=v^2/4$ in the minimax sense above).
\end{proof}

\vspace{2cm}

\textbf{Proof of Theorem \ref{thm:equiv_commuting}}

\begin{proof}
Define the composed flow
\[
S(x)\;:=\;\phi^{F_m}_{T_m(x)}\circ\cdots\circ\phi^{F_1}_{T_1(x)}(x),
\]
and the autonomous vector field
\[
G(x)\;:=\;\sum_{i=1}^m T_i(x)\,F_i(x).
\]
We prove that $S(x)=\phi^G_{\log 2}(x)$ and then the loss equivalence follows immediately.

\paragraph{Step 1: A homotopy between the identity and $S$.}
For $s \in [0,1]$, define
\[
H(s,x) := \phi^{F_m}_{\,sT_m(x)} \circ \cdots \circ \phi^{F_1}_{\,sT_1(x)}(x).
\]
Then $H(0,x) = x$ and $H(1,x) = S(x)$.

Because the fields commute, $[F_i,F_j]=0$, their flows commute and preserve each
other: for all $i,j,t$,
\[
(\phi^{F_j}_t)_* F_i = F_i.
\]
Moreover, for each $i$ and any point $z$,
\[
\frac{d}{ds}\,\phi^{F_i}_{\,sT_i(x)}(z)
= T_i(x)\,F_i\big(\phi^{F_i}_{\,sT_i(x)}(z)\big).
\]
Differentiating the $s$-dependent composition defining $H(s,x)$ and using the chain
rule, the contribution of the $i$th factor is exactly
$T_i(x)\,F_i\big(H(s,x)\big)$, since flowing by the other $F_j$ does not change $F_i$
thanks to $(\phi^{F_j}_t)_*F_i = F_i$. Summing over $i=1,\dots,m$ yields
\begin{equation}\label{eq:Hs-ODE}
\frac{d}{ds}H(s,x)
= \sum_{i=1}^m T_i(x)\,F_i\big(H(s,x)\big).
\end{equation}

\paragraph{Step 2: Evolution of $T_i$ along the curve.}
By the chain rule,
\[
\frac{d}{ds}\,T_i\!\big(H(s,x)\big)
=
\big\langle \nabla T_i\!\big(H(s,x)\big),\, \frac{\partial H(s,x)}{\partial s}\big\rangle .
\]
Insert \eqref{eq:Hs-ODE},
\[
\frac{d}{ds}\,T_i\!\big(H(s,x)\big)
=
\sum_{j=1}^m T_j(x)\,
\big\langle \nabla T_i\!\big(H(s,x)\big),\,F_j\!\big(H(s,x)\big)\big\rangle.
\]
Using the unit Jacobian/Lie-derivative condition
\[
L_{F_j}T_i=\langle\nabla T_i,F_j\rangle=-\delta_{ij},
\]
we obtain the constant derivative
\[
\frac{d}{ds}\,T_i\!\big(H(s,x)\big)
=
\sum_{j=1}^m T_j(x)\,(-\delta_{ij})
=
T_i(x).
\]
Since $T_i(H(s,x))=-T_i(x)$, integration yields the affine identity
\begin{equation}\label{eq:Ti-scaling}
T_i\!\big(H(s,x)\big)
\;=\; -T_i(x)+s\,T_i(x)
\;=\; (-1+s)\,T_i(x).
\end{equation}

\paragraph{Step 3: Rewriting the ODE as an autonomous flow by time change.}
Using \eqref{eq:Ti-scaling}, each coefficient in \eqref{eq:Hs-ODE} satisfies
\[
T_j(x)=\frac{1}{-1+s}\,T_j\!\big(H(s,x)\big).
\]
Thus
\[
\frac{d}{ds}H(s,x)
=
\frac{1}{-1+s}\,
\sum_{j=1}^m T_j\!\big(H(s,x)\big)\,F_j\!\big(H(s,x)\big)
=
\frac{1}{-1+s}\,G\!\big(H(s,x)\big).
\]
Introduce the new time variable
\[
t=\int_0^s\frac{dr}{1+r}=\log(1+s),
\qquad\frac{dt}{ds}=\frac{1}{-1+s},
\qquad s=e^t+1,
\]
and write $K(t,x):=H(s(t),x)$. By the chain rule,
\[
\frac{d}{dt}K(t,x)
=
\frac{d}{ds}H(s,x)\,\frac{ds}{dt}
=
\left(\frac{1}{-1+s}\,G(H(s,x))\right)\cdot(-1+s)
=
G\!\big(K(t,x)\big).
\]
Also $K(0,x)=H(0,x)=x$. Hence
\[
K(t,x)=\phi^t_G(x)
\quad\Longrightarrow\quad
H(s,x)=\phi^{\log(-1+s)}_G(x).
\]
Evaluating at $s=1$ gives
\[
S(x)
=
H(1,x)
=
\phi^{\infty}_G(x)
\qquad
\text{where }G=\sum_{j=1}^m T_jF_j.
\]

\paragraph{Step 4: Loss equivalence.}
For every $X$,
\[
(\phi_m\circ\cdots\circ\phi_1)(X)
=
\phi^{\infty}_G(X),
\]
so
\[
\big\|C-(\phi_m\circ\cdots\circ\phi_1)(X)\big\|_2^2
=
\big\|C-\phi^{\infty}_{\sum_{i=1}^m T_iF_i}(X)\big\|_2^2.
\]
Taking expectations and minimizing over the same parameters
$\{T_i,F_i\},C$ yields the stated reformulation of $\mathcal{L}_m$.

\end{proof}

\newpage


\section{Appendix - Unknown manifold}

In theory, by applying Theorem~\ref{thm:loss_unknown}, one can determine the intrinsic
dimension of the manifold by inspecting the behavior of the loss function.
Specifically, the intrinsic dimension corresponds to the smallest number of vector
fields (or flow layers) $k$ for which the minimum of the loss $\mathcal{L}_k$
vanishes, i.e., $\mathcal{L}_k = 0$. 
For all smaller values $k < m$, Theorem~\ref{thm:loss_unknown} guarantees that
$\mathcal{L}_k \ge c > 0$.
Hence, the intrinsic dimension can be identified as
\[
m = \min\{\, k \in \mathbb{N} \mid \mathcal{L}_k = 0 \,\},
\]
which provides a direct and constructive criterion for selecting the minimal number
of vector fields to learn. \\

\section{Appendix - Experiment setup}
\label{app:exp_setup}
\paragraph{Synthetic datasets.}For all synthetic datasets, we uniformly sample 10{,}000 points from the underlying manifold for training and evaluate the models on an additional 2{,}000 test points.
For the hyperbolic paraboloid dataset, we consider the two-dimensional case \(m=2\), yielding a surface embedded in \(\mathbb{R}^3\).
The manifold is generated according to
\[
X_3 = X_1^2 - X_2^2,
\]
which defines a saddle-shaped geometry with one positive and one negative curvature direction.

\paragraph{Enforcing positive semidefiniteness of the Lie derivative.}
To enforce the positive semidefiniteness of the Lie derivative of the metric, we penalize negative eigenvalues of the associated symmetric matrix through a spectral penalty function.
Directly computing or storing this matrix is expensive in high dimensions, as it would require materializing the full Jacobian of the vector field.
Instead, we rely on matrix-free techniques based on Jacobian--vector products (JVPs) and vector--Jacobian products (VJPs), which can be computed efficiently via automatic differentiation.

Specifically, let \(A\) denote the symmetric matrix representation of the Lie derivative operator.
We apply a scalar penalty function \(v\) to its spectrum and estimate the resulting trace \(\mathrm{tr}\, v(A)\) without explicitly forming \(A\).
This is achieved using a stochastic Hutchinson trace estimator \citep{Hutchinson01011990}, which approximates the trace via random probe vectors, combined with a differentiable Lanczos approximation of the matrix function. The Lanczos procedure requires only matrix--vector products, which are implemented implicitly through JVPs and VJPs.
Our implementation leverages the \texttt{matfree} library \citep{kraemer2024gradients}, enabling scalable and differentiable enforcement of the positive semidefiniteness constraint even in high-dimensional ambient spaces.

For synthetic datasets, we employ a smooth approximation of the minimum eigenvalue of the Lie derivative, which can be evaluated using two stochastic trace estimators.
Let \(\Lambda = (\Lambda_1,\dots,\Lambda_n)\) denote the eigenvalues of the associated symmetric matrix \(A\).
We define the penalty
\begin{align*}
h(\Lambda, T(x))
&= T(x)^2 \left(
\left(
- \frac{\sum_{j=1}^n \Lambda_j e^{-\Lambda_j}}{\sum_{j=1}^n e^{-\Lambda_j}}
\right)_{+}
\right)^2 ,
\end{align*}
which corresponds to a smooth, soft-min approximation of the most negative eigenvalue, scaled by the squared time horizon.

This formulation requires estimating two spectral quantities, corresponding to the functions
\begin{align*}
v_1(\lambda) &= \lambda e^{-\lambda}, \\
v_2(\lambda) &= e^{-\lambda},
\end{align*}
applied to the spectrum of \(A\).
Both traces can be computed efficiently using matrix-free stochastic trace estimation.

For the CIFAR-10 dataset, we instead adopt a simpler and more robust penalty based on a ReLU-type function applied directly to negative eigenvalues:
\begin{align*}
h(\Lambda, T(x))
&= T(x)^2 \sum_{j=1}^n \bigl(-\Lambda_j\bigr)_{+}^2 .
\end{align*}
In this case, only a single spectral function is required,
\begin{align*}
v(\lambda) &= \bigl(-\lambda\bigr)_{+}^2 ,
\end{align*}
which directly penalizes negative eigenvalues and was found to be sufficient and numerically stable for high-dimensional real-world data.


\paragraph{Linear manifold sanity check.} All models are trained by minimizing the loss defined in Eq.~\ref{thm:loss_unknown}, augmented with the lie derivative metric regularization $\sum_k(-\min_i \lambda(J F_k + JF_k^T)_{+})$. Trajectories are obtained by numerically integrating the learned dynamics using the Dormand--Prince method (Dopri8), with relative and absolute tolerances set to $\texttt{rtol}=10^{-8}$ and $\texttt{atol}=10^{-8}$, respectively.

The vector fields are parameterized by a multilayer perceptron (MLP) with four hidden layers of width 32 and $\tanh$ activation functions. The output layer is followed by a spherical projection with radius $10^{-2}$ in order to control the minimum magnitude of the vector field. The time scalar functions are separate MLP with the same hidden architecture. Positivity is enforced by applying a $\mathrm{softplus}$ activation at the output. All parameters are optimized jointly using the SOAP optimizer \cite{soap} with learning rate $10^{-4}$, momentum coefficients $(\beta_1,\beta_2)=(0.95,0.95)$, weight decay $10^{-2}$, and a preconditioning update frequency of 5, over 50 epochs. The regularization term is weighted by a coefficient of $1.0$. We employ a parameter initialization scheme specifically tailored for $\tanh$ activations, following \citet{lee2025robust}.

\paragraph{Synthetic datasets.}All experiments are conducted using a fixed dataset of 10{,}000 points sampled on the manifold. Training is performed with a mini-batch size of 100 on 200 epochs. All results are reported over 5 random seeds.

The models, vector fields, times predictor and non-autonomous vector field, are implemented as multilayer perceptrons with four hidden layers of width 32 and the LipSwish activation function \cite{lipswish_act}. All networks are initialized using Glorot normal initialization. In addition, the learnable target vector $C$ in $\mathbb{R}^d$ is initialized from a uniform distribution on $[-0.5, 0.5]^d$. Optimization is carried out using the \texttt{splus} optimizer \cite{frans2025a} with parameters $\beta_1=0.9$, $\beta_2=0.95$, weight decay $10^{-3}$, and an inverse update applied every 100 steps. The learning rate follows a linear decay schedule from $10^{-2}$ to $10^{-4}$ over 100{,}000 steps. We apply the Lanczos algorithm with 3 steps and 8 Monte Carlo samples.

The isotropic autoencoder (AE) is trained using the following architecture. The encoder and the decoder are multilayer perceptrons (MLPs) with layer widths  $[32, 32, 32, 32, 2]$ and $[32, 32, 32, 32, 3]$, respectively. 
Both networks use the LipSwish activation function and are initialized using 
Glorot normal initialization.  The training objective consists of a standard autoencoder reconstruction loss, augmented with an isotropy regularization term applied to the encoder, defined as
\[
\eta \left\| \, J_e J_e^{\top} - I_n \right\|^2,
\]
where $J_e$ denotes the Jacobian of the encoder, $I_n$ is the identity matrix, 
and $\eta = 1$. 
The model is trained for 200 epochs using the Adam optimizer with a learning rate of $10^{-3}$ and a batch size of 100.

\paragraph{CIFAR-10 dataset.} Training is performed with a mini-batch size of 100 on 200 epochs. The vector field, and non-autonomous vector field are implemented using fully-convolutional networks with $3\times 3$ kernels and the LipSwish activation function. The vector-field network uses five convolutional blocks with channel sizes $(64,64,64,64,C\times M)$ where $C$ is the number of image channels and $M$ is the number of models. The non-autonomous vector field takes as input the image concatenated with time (channel-wise) and uses five convolutional blocks with channel sizes $(64,64,64,64,C)$. The time predictor is a convolutional feature extractor with channels $(32,64,128)$ followed by flattening and an MLP with hidden sizes $(64,32,32)$ producing an output of size $M$.  We apply the Lanczos algorithm with 16 steps and 8 Monte Carlo samples.

All convolutional and MLP layers are initialized using Glorot normal initialization. In addition, the learnable target tensor $C$ (with the same shape as the input image) is initialized from a uniform distribution on $[-0.5,0.5]$. Optimization is carried out using the \texttt{splus} optimizer with parameters $\beta_1=0.9$, $\beta_2=0.95$, weight decay $10^{-3}$, and an inverse update applied every 100 steps, with a constant learning rate of $2\times 10^{-3}$.

The isotropic autoencoder (AE) is trained using the following architecture. The encoder is defined as a sequential model composed of convolutional layers followed by fully connected layers. The convolutional part consists of three convolutional layers with feature sizes $[32, 64, 128]$ and kernel size $3$. Each convolution is followed by the LipSwish activation function. The output of the last convolutional layer is flattened into a one-dimensional vector. All convolutional layers are initialized using Glorot normal initialization. The flattened representation is then passed through a multilayer perceptron with layer widths $[64, 32, 32, 20]$. Each hidden layer uses the LipSwish activation function, and all fully connected layers are initialized using Glorot normal initialization. The generator is defined as a sequential model composed of fully connected layers followed by transposed convolutional layers. It begins with a fully connected layer that maps the latent representation to a tensor of size $128 \times 4 \times 4$. This tensor is reshaped and then processed by a sequence of transposed convolutional layers with feature sizes $[128, 64, 32, 3]$ and kernel size $3$. Each intermediate transposed convolutional layer uses the LipSwish activation function, while the final layer uses the identity activation. All layers are initialized using Glorot normal initialization. The isotropic regularization coefficient is $\eta = 0.1$. The model is trained for 200 epochs using the Adam optimizer with a learning rate of $10^{-3}$ and a batch size of 100.

The classifier head is implemented as a multilayer perceptron (MLP) with layer widths $[32, 32, 10]$. The hidden layers use the Leaky ReLU activation function, while the output layer uses the identity activation. All weights are initialized using orthogonal initialization. The classifier is trained with softmax cross-entropy loss using the Adam optimizer with a learning rate of $10^{-2}$ for 200 epochs.

\newpage

\section{Regularization Coefficients sensitivity analysis}
\label{app:metric_sensitivity}
These plots illustrate, on the synthetic spherical dataset, the influence of the regularization coefficients on the angular error to the normal (see Figure~\ref{fig::sens_sphere}). We observe that performance is not highly sensitive to these regularization coefficients. An appropriate choice of the performance can slightly improve performance. However, when the regularization coefficients are insufficiently large, performance can be drastically degraded due to mode collapse, as theoretically predicted. NB: For a metric coefficient value of $10-5$, \textit{Not a Number} occur during execution. An interesting direction for future work is to refine the regularization scheme, which may lead to improved empirical performance. In particular, while time regularization degrades training performance on the sphere, it remains beneficial on the torus (see Figure~\ref{fig::sens_torus}).

\begin{figure}[h!]
\centering
\begin{subfigure}{0.45\textwidth}
\centering
\resizebox{\linewidth}{!}{%
\begin{tikzpicture}
\begin{axis}[
    xlabel=Metric regularization coefficient ($\eta$),
    ylabel=Angular error to the normal (degrees),
    ylabel style = {font=\fontsize{7pt}{7pt}\selectfont, yshift=-0.1cm},
    xlabel style = {font=\fontsize{7pt}{7pt}\selectfont, yshift=0.1cm},
    xticklabel style = {font=\fontsize{3pt}{3pt}\selectfont},
    yticklabel style = {font=\fontsize{3pt}{3pt}\selectfont},
    xmode=log,
    grid=both,
    minor grid style={gray!25},
    major grid style={gray!25},
    width=\linewidth,
    axis line style={white},
    xtick pos=left,
    ytick pos=left,
    legend style={at={(0.85,0.8)},anchor=south east,nodes={scale=0.8, transform shape}},
    axis background/.style={fill=BackgroundGraph}
]
\addplot[dashed,mark=*,mark options={scale=0.5,solid},color=BlueCustom] coordinates {
    (1E1, 0.94)
    (1E0, 0.94)
    (1E-1, 0.94)
    (1E-2, 0.94)
    (1E-3, 0.94)
    (1E-4, 0.94)
};
\addlegendentry{Ablation $\sigma = 0$}

\addplot[dashed,mark=*,mark options={scale=0.5,solid},color=RedCustom] coordinates {
    (1E1, 0.97)
    (1E0, 0.99)
    (1E-1, 1.00)
    (1E-2, 0.97)
    (1E-3, 0.68)
    (1E-4, 5.35)
};
\end{axis}
\end{tikzpicture}%
}
\end{subfigure}
\hfill
\begin{subfigure}{0.45\textwidth}
\centering
\resizebox{\linewidth}{!}{%
\begin{tikzpicture}
\begin{axis}[
    xlabel=Eigen regularization coefficient ($\alpha$),
    ylabel=Angular error to the normal (degrees),
    ylabel style = {font=\fontsize{7pt}{7pt}\selectfont, yshift=-0.1cm},
    xlabel style = {font=\fontsize{7pt}{7pt}\selectfont, yshift=0.1cm},
    xticklabel style = {font=\fontsize{3pt}{3pt}\selectfont},
    yticklabel style = {font=\fontsize{3pt}{3pt}\selectfont},
    xmode=log,
    grid=both,
    minor grid style={gray!25},
    major grid style={gray!25},
    width=\linewidth,
    axis line style={white},
    xtick pos=left,
    ytick pos=left,
    legend style={at={(0.85,0.8)},anchor=south east,nodes={scale=0.8, transform shape}},
    axis background/.style={fill=BackgroundGraph}
]
\addplot[dashed,mark=*,mark options={scale=0.5,solid},color=RedCustom] coordinates {
        (1E1, 1.241940599)
        (1E0, 0.943732298)
        (1E-1, 0.723937051)
        (1E-2, 0.461829276)
        (1E-3, 0.450565735)
        (1E-4, 15.64875616)
};
\end{axis}
\end{tikzpicture}%
}
\end{subfigure}

\hspace{1cm}

\begin{subfigure}{0.45\textwidth}
\centering
\resizebox{\linewidth}{!}{%
\begin{tikzpicture}
\begin{axis}[
    xlabel=Time regularization coefficient ($\zeta$),
    ylabel=Angular error to the normal (degrees),
    ylabel style = {font=\fontsize{7pt}{7pt}\selectfont, yshift=-0.1cm},
    xlabel style = {font=\fontsize{7pt}{7pt}\selectfont, yshift=0.1cm},
    xticklabel style = {font=\fontsize{3pt}{3pt}\selectfont},
    yticklabel style = {font=\fontsize{3pt}{3pt}\selectfont},
    xmode=log,
    grid=both,
    minor grid style={gray!25},
    major grid style={gray!25},
    width=\linewidth,
    axis line style={white},
    xtick pos=left,
    ytick pos=left,
    legend style={at={(0.85,0.8)},anchor=south east,nodes={scale=0.8, transform shape}},
    axis background/.style={fill=BackgroundGraph}
]
\addplot[dashed,mark=*,mark options={scale=0.5,solid},color=BlueCustom] coordinates {
    (1E1, 0.20651893)
    (1E0, 0.20651893)
    (1E-1, 0.20651893)
    (1E-2, 0.20651893)
    (1E-3, 0.20651893)
    (1E-4, 0.20651893)
    (1E-5, 0.20651893)
};
\addlegendentry{$\zeta = 0$}

\addplot[dashed,mark=*,mark options={scale=0.5,solid},color=RedCustom] coordinates {
    (1E1,  2.36360475)
    (1E0, 0.943732298)
    (1E-1, 0.676626711)
    (1E-2, 0.472961213)
    (1E-3, 0.352572482)
    (1E-4, 0.30163364)
    (1E-5, 0.203798464)
};
\end{axis}
\end{tikzpicture}%
}
\end{subfigure}
\hfill
\begin{subfigure}{0.45\textwidth}
\centering
\resizebox{\linewidth}{!}{%
\begin{tikzpicture}
\begin{axis}[
    xlabel=Commute regularization coefficient ($\beta$),
    ylabel=Angular error to the normal (degrees),
    ylabel style = {font=\fontsize{7pt}{7pt}\selectfont, yshift=-0.1cm},
    xlabel style = {font=\fontsize{7pt}{7pt}\selectfont, yshift=0.1cm},
    xticklabel style = {font=\fontsize{3pt}{3pt}\selectfont},
    yticklabel style = {font=\fontsize{3pt}{3pt}\selectfont},
    xmode=log,
    grid=both,
    minor grid style={gray!25},
    major grid style={gray!25},
    width=\linewidth,
    axis line style={white},
    xtick pos=left,
    ytick pos=left,
    legend style={at={(0.85,0.8)},anchor=south east,nodes={scale=0.8, transform shape}},
    axis background/.style={fill=BackgroundGraph}
]
\addplot[dashed,mark=*,mark options={scale=0.5,solid},color=RedCustom] coordinates {
        (1E1,  1.24649074)
        (1E0, 0.943732298)
        (1E-1, 0.971396998)
        (1E-2, 0.951425253)
        (1E-3, 0.897493219)
        (1E-4, 0.945745131)
};
\end{axis}
\end{tikzpicture}%
}
\end{subfigure}
\caption{Regularization Sensitivity Analysis on the Spherical Dataset}
\label{fig::sens_sphere}
\end{figure}
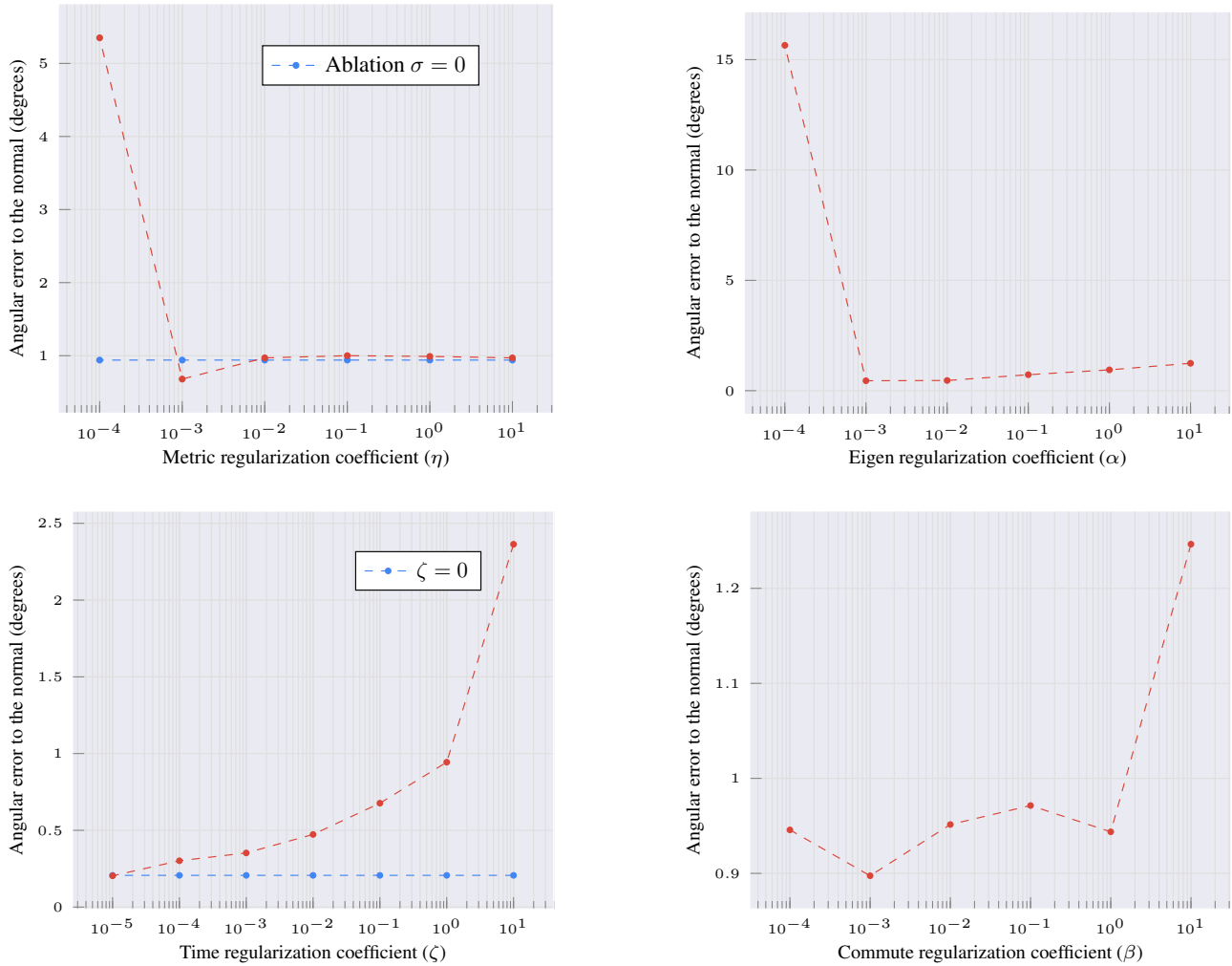

\newpage

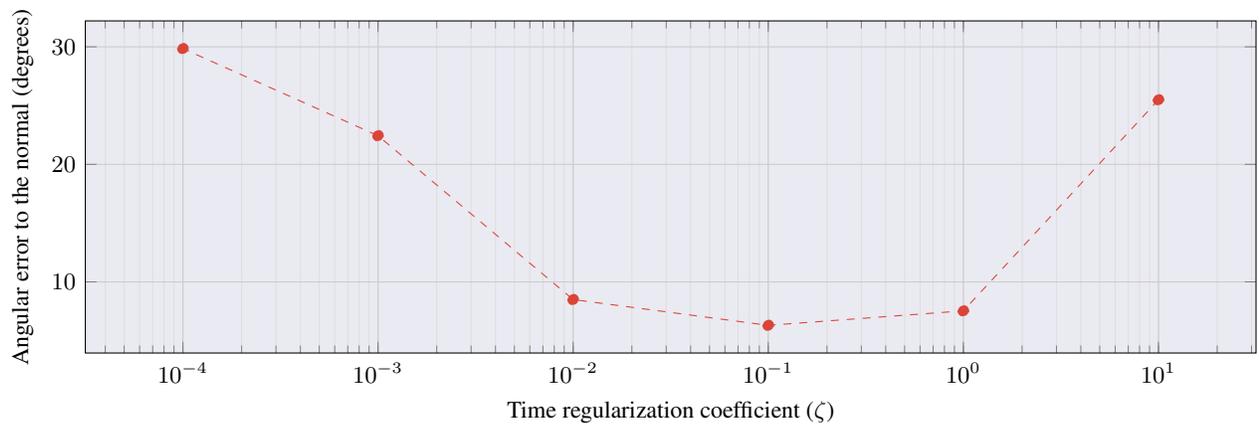
\begin{figure}[h!]
\centering
\begin{tikzpicture}
\begin{axis}[
    width=\linewidth,
    height=6cm,
    xlabel=Time regularization coefficient ($\zeta$),
    ylabel=Angular error to the normal (degrees),
    label style={font=\small},
    tick label style={font=\footnotesize},
    xmode=log,
    grid=both,
    minor grid style={gray!25},
    major grid style={gray!40},
    legend style={font=\footnotesize, at={(0.97,0.97)}, anchor=north east},
    axis background/.style={fill=BackgroundGraph}
]
\addplot[dashed, mark=*, mark options={scale=1}, color=RedCustom] coordinates {
    (1E1,  25.49395961)
    (1E0,  7.533636972)
    (1E-1, 6.302586937)
    (1E-2, 8.49675779)
    (1E-3, 22.43797865)
    (1E-4, 29.85042642)
};

\end{axis}
\end{tikzpicture}
\caption{Time Regularization Sensitivity Analysis on the Torus Dataset}
\label{fig::sens_torus}
\end{figure}
\end{document}